\newcommand{\eclause}{{\unitlength 2.3mm \,\framebox(1,1){}\,}}
\newcommand{\wcla}[2]{_{(#1)}\,#2}
\newcommand{\xor}{\oplus}
\newcommand{\ignorar}[1]{}
\newcommand{\opt}{Opt}
\newcommand{\cost}{Cost}
\newcommand{\weight}{Weight}
\title{Reducing SAT to Max2XOR}
\author{Carlos Ansótegui}{Logic \& Optimization Group (LOG), University of Lleida, Spain}{carlos@diei.udl.cat}{https://orcid.org/0000-0001-7727-2766}{This work is supported by Grant PID2019-109137GB-C21 funded by
MCIN/AEI/10.13039/501100011033.}
\author{Jordi Levy}{IIIA-CSIC, Spain}{levy@iiia.csic.es}{https://orcid.org/0000-0001-5883-5746}{This work is supported by Grant PID2019-109137GB-C21 funded by \hfill\mbox{}
MCIN/AEI/10.13039/501100011033.}
\authorrunning{C. Ansótegui and J. Levy} 
\keywords{Satisfiability, SAT, MaxSAT, Max2XOR, gadgets} 
\date{February 2022}
\begin{document}

\maketitle

\begin{abstract}
Representing some problems with XOR clauses (parity constraints) can allow to apply more efficient reasoning techniques. In this paper, we present a gadget for translating SAT clauses into Max2XOR constraints, i.e., XOR clauses of at most 2 variables equal to zero or to one.  Additionally, we present new resolution rules for the Max2XOR problem which asks for which is the maximum number of constraints that can be satisfied from a set of 2XOR equations. 
\end{abstract}

\section{Introduction}

Exclusive OR (XOR), here written $\xor$, may be an alternative to the use of traditional OR to represent propositional formulas. By writing clauses $x_1\xor \cdots \xor x_k$ as constraints $x_1\xor \cdots \xor x_k=1$, where $1$ means true and $0$ false, we can avoid the use of negation, because $\neg x\xor C=k$ is equivalent to $x\xor C=k\xor 1$, where by abuse of notation, $\xor$ on the right-hand side denotes the addition modulo $2$. The equivalent to the resolution rule for XOR constraints, called XOR resolution rule, is
\[
\begin{array}{c}
X\xor A=k_1\\
X\xor B=k_2\\
\hline
A\xor B = k_1\xor k_2
\end{array}
\]
where $X$, $A$ and $B$ are clauses. In the particular case of $A=B=\emptyset$ and $k_1\neq k_2$, this rule concludes a contradiction, that we represent as $\eclause$. The proof system containing only this rule allows us to produce polynomial refutations for any unsatisfiable  set of XOR constraint. Therefore, unless $P=NP$, we cannot express any propositional formula as an equivalent conjunction of XOR constraints. It is also well-known that a set of XOR-constraints can be solved in polynomial time using Gaussian elimination. In practice, many approaches combining CNF and XOR reasoning have been presented \cite{Li00,Li00prima,BaumgartnerM00,Li03,HeuleM04,HeuleDZM04,Chen09,SoosNC09,LaitinenJN12sat12,Soos10,LaitinenJN11,LaitinenJN12ictai12,SoosM19}.

In this paper, we focus on the problem of reducing the satisfiability of a CNF formula (SAT) to the satisfiability of the \emph{maximum} number of XOR constraints (MaxXOR), in particular, of XOR constraints with two or less variables (Max2XOR).

We will consider weighted constraints, noted $\wcla{w}{C=k}$, where $w$ is a rational number that denotes the contribution of the satisfiability of the constraint to the formula. This way, we can translate every binary clause $x\vee y$ as $\{\wcla{1/2}{x=1},\ \wcla{1/2}{y=1},\ \wcla{1/2}{x\xor y=1}\}$, because when $x$ or $y$ are equal to one (i.e. $x\vee  y$ is satisfied), exactly two of the XOR constraints are satisfied, which contributes $1/2+1/2=1$ to the constraint, and when $x$ and $y$ are both equal to zero and the original clause is falsified, none of the XOR constraints are satisfied. We can also translate ternary clauses like $x\vee y\vee z$ as $\{\wcla{1/4}{x=1},$ $\wcla{1/4}{y=1},$ $\wcla{1/4}{z=1},$ $\wcla{1/4}{x\xor y=1},$ $\wcla{1/4}{x\xor z=1},$ $\wcla{1/4}{y\xor z=1},$ $\wcla{1/4}{x\xor y\xor z=1}\}$. In general, any OR clause $x_1\vee \dots\vee x_k$ is equivalent to the set of (weighted) XOR constraints:
\[
\bigcup_{S\subseteq \{1,\dots,k\}} \{\wcla{1/2^{k-1}}{\bigoplus_{i\in S} x_i = 1}\}
\]
This translation allows us to reduce SAT to MaxXOR. However, the reduction is not polynomial, because it translates every clause of size $k$ into $2^k-1$ constraints. In Section~\ref{sec:MaxSAT->Max2XOR}, we describe a polynomial reduction that avoids this exponential explosion on expenses of introducing new variables.

In Section~\ref{sec:OnProofSystems}, we will discuss the possible definition of a proof system for Max2XOR, in the spirit of the MaxSAT resolution which was first defined by~\cite{LarrosaH05}, and proven complete by~\cite{SAT06,AIJ1}.

\section{Preliminaries}

A k-ary \emph{constraint function} is a Boolean function $f:\{0,1\}^k\to\{0,1\}$. 
A \emph{constraint family} is a set $\mathcal F$ of constraint functions (with possibly distinct arities). 
A \emph{constraint}, over variables $V=\{x_1,\dots,x_n\}$ and constraint family $\mathcal F$, is a pair formed by a k-ary constraint function $f\in\mathcal{F}$ and a subset of $k$ variables, noted $f(x_{i_1},\dots,x_{i_k})$, or $f(\vec{x})$ for simplicity.  
A \emph{(weighted) constraint problem} or \emph{(weighted) formula} $P$, over variables $V$ and constraint family $\mathcal F$, is a set of pairs (weight, constraint) over $V$ and $\mathcal{F}$, where the weight is a positive rational number, denoted $P=\{\wcla{w_1}{f_1(x^1_{i_1},\dots,x^1_{i_{k_1}})},\dots,\wcla{w_m}{f_m(x^m_{i_1},\dots,x^m_{i_{k_m}})}\}$. The \emph{weight} of a problem is $\weight(P)=w_1+\cdots+w_m$.

An \emph{assignment} is a function $I:\{x_1,\dots,x_n\}\to\{0,1\}$. We say that an assignment $I$ \emph{satisfies} a constraint $f(x_{i_1},\dots,x_{i_k})$, if $I(f(\vec{x}))=_{\mbox{def}}f(I(x_{i_1}),\dots,I(x_{i_k}))=1$. 
The value of an assignment $I$ for a constraint problem $P=\{\wcla{w_i}{f_i(\vec{x})}\}_{i=1,\dots,m}$, is the sum of the weights of the constraints that this assignment satisfies, i.e. $I(P) = \sum_{i=1}^m w_i\,I(f_i(\vec{x}))$.
We also define the sum of weights of unsatisfied clauses as $\overline I(P) = \sum_{i=1}^m w_i\,(1-I(f_i(\vec{x})))$.
An assignment $I$ is said to be \emph{optimal} for a constraint problem $P$, if it is maximizes $I(P)$. We define this optimum as $\opt(P)=\max_{I} I(P)$. We also define $\cost(P)=\min_I \overline I(P)$, i.e. the minimum sum of weights of falsified constraints, that fulfils $\opt(P)+\cost(P)=\weight(P)$.

For convenience, we generalize constraint problems to multisets, and consider $\{\wcla{w_1}{C},$ $\wcla{w_2}{C}\}$ as equivalent to $\{\wcla{w_1+w_2}{C}\}$. When the weight of a constraint is not explicitly specified, it is assumed to be one.

When using inference rules, we assume that they can be applied with any weight and that premises are decomposed conveniently. For instance, if we have the inference rule $C=0, C=1\vdash \eclause$, and we apply it to the problem $P=\{\wcla{w_1}{A=0},\,\wcla{w_2}{A=1}\}$, assuming $w_1\geq w_2$, we proceed as follows. First, we decompose the first constraint to get $P'=\{\wcla{w_2}{A=0},\,\wcla{w_1-w_2}{A=0},\,\wcla{w_2}{A=1}\}$. Then, we apply the rule with weight $w_2$ to get $P'=\{\wcla{w_2}{\eclause},\,\wcla{w_1-w_2}{A=0}\}$. When applying a rule, we assume that at least one of the premises is not decomposed.

Many optimization problems may be formalized as the problem of finding an optimal assignment for a constraint problem. In the following we define some of them:
\begin{enumerate}
    \item MaxEkSAT is the constraint family defined by the constraint functions of the form $f(x_1,\dots,x_k)= l_1\vee\cdots\vee l_k$, where every $l_i$ may be either $x_i$ or $\neg x_i$.
    \item MaxkSAT is the union of the constraint families MaxEiSAT, for $0\leq i\leq k$, and MaxSAT the union, for every $i\geq 0$.
    \item MaxEkPC$_0$ is the constraint family defined by the constraint function $f(x_1,\dots,x_k)=1-x_1\oplus\cdots\oplus x_k$. Similarly, MaxEkPC$_1$ is defined by $f(x_1,\dots,x_k)=x_1\oplus\cdots\oplus x_k$, and MaxEkXOR or MaxEkPC is the union of MaxEkPC$_0$ and MaxEkPC$_1$.
    \item MaxkXOR or MaxkPC is the union of MaxEiXOR, for $0\leq i\leq k$, and MaxXOR or MaxPC the union, for every $i\geq 0$.
    \item MaxCUT is the same as MaxE2PC$_1$.
\end{enumerate}

In the literature, PC (similarly for PC$_0$ and PC$_1$) use to denote what we denote as MaxE3PC. However, notice that, in this paper, we are interested in the problem Max2XOR, that has got very little attention in the literature, probably because it is quite similar to MaxCUT. Notice that MaxCUT use to be defined as the problem of, given a graph, finding a partition of the vertices that maximizes the number of edges that connects vertices of distinct partition. The problem can be trivially formalized assigning a Boolean variable to each vertex, and a constraint $x\oplus y=1$ to each edge connecting $x$ and $y$.

Next, we define gadgets as a transformation from constraints into sets of weighted constraints (or problems). These transformations can be extended to define transformations of problems into problems.

\begin{definition}
Let $\mathcal{F}_1$ and $\mathcal{F}_2$ be two constraint families.
A (strict) \emph{$(\alpha,\beta)$-gadget} from $\mathcal{F}_1$ to $\mathcal{F}_2$ is a function that, for any constraint $f(\vec{x})$ over $\mathcal{F}_1$ returns a weighted constraint problem $P=\{\wcla{w_i}{g_i(\vec{x},\vec{a})}\}_{i=1,\dots,m}$ over $\mathcal{F}_2$ and variables $\{\vec{x}\}\cup\{\vec{a}\}$, where $\vec{a}$ are fresh variables distinct from $\vec{x}$, such that $\beta=\sum_{i=1}^m w_i$ and, for any assignment $I:\{\vec{x}\}\to\{0,1\}$:
\begin{enumerate}
    \item If $I(f(\vec{x}))=1$, for any extension of $I$ to $I':\{\vec{x}\}\cup\{\vec{a}\}\to\{0,1\}$, $I'(P)\leq \alpha$ and there exist one of such extension with $I'(P) = \alpha$.
    \item If $I(f(\vec{x}))=0$, for any extension of $I$ to $I':\{\vec{x}\}\cup\{\vec{a}\}\to\{0,1\}$, $I'(P)\leq \alpha-1$ and there exist one of such extension with $I'(P) = \alpha-1$.
\end{enumerate}
\end{definition}

\begin{lemma}\label{lem-concatenation}
The composition of a $(\alpha_1,\beta_1)$-gadget from $\mathcal{F}_1$ to $\mathcal{F}_2$ and a $(\alpha_2,\beta_2)$-gadget from $\mathcal{F}_2$ to $\mathcal{F}_3$ results into a $(\beta_1\,(\alpha_2-1)+\alpha_1,\ \beta_1\beta_2)$-gadget from $\mathcal{F}_1$ to~$\mathcal{F}_3$.
\end{lemma}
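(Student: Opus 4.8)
The plan is to spell out the composition explicitly and then exploit linearity of the value function over pairwise disjoint sets of fresh variables. Let $G_1$ be the given $(\alpha_1,\beta_1)$-gadget and $G_2$ the given $(\alpha_2,\beta_2)$-gadget. For a constraint $f(\vec{x})$ over $\mathcal{F}_1$, I would first apply $G_1$ to obtain $P=\{\wcla{w_i}{g_i(\vec{x},\vec{a})}\}_{i=1}^{m}$ over $\mathcal{F}_2$ with $\sum_i w_i=\beta_1$. Then, using the extension of $G_2$ to problems, I apply $G_2$ to each $g_i$, scale its whole output by $w_i$, and rename its fresh variables so that the sets introduced for distinct $i$ are pairwise disjoint (and disjoint from $\vec{x}$ and $\vec{a}$). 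Call the scaled result $w_i\,Q_i$, a problem over $\mathcal{F}_3$ whose weights sum to $w_i\beta_2$. The composed gadget is $R=\bigcup_{i=1}^{m} w_i\,Q_i$.

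The bookkeeping for the second parameter is immediate: $\weight(R)=\sum_i w_i\beta_2=\beta_2\sum_i w_i=\beta_1\beta_2$. For the first parameter, the key step is a per-term bound obtained from linearity. For any extension $I'$ of an assignment $I$ of $\vec{x}$ to all the fresh variables, $I'(R)=\sum_i w_i\,I'(Q_i)$. For fixed $I'$ each $g_i(\vec{x},\vec{a})$ has a determined truth value, so the gadget property of $G_2$ gives $I'(Q_i)\le \alpha_2-1+I'(g_i)$, the two cases $I'(g_i)\in\{0,1\}$ matching exactly the two clauses of the gadget definition. Summing yields $I'(R)\le(\alpha_2-1)\beta_1+I'(P)$, where $I'(P)=\sum_i w_i\,I'(g_i)$.

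It remains to optimize over the $\vec{a}$-variables using $G_1$. If $I(f(\vec{x}))=1$, then $I'(P)\le\alpha_1$ for every extension and $I'(P)=\alpha_1$ for some choice of $\vec{a}$; combining with the previous bound gives $I'(R)\le(\alpha_2-1)\beta_1+\alpha_1$ for all extensions, and I would exhibit the matching witness by first fixing $\vec{a}$ so that $I'(P)=\alpha_1$ and then, for each $i$, choosing the fresh variables of $Q_i$ so that $I'(Q_i)=\alpha_2-1+I'(g_i)$. If instead $I(f(\vec{x}))=0$, the identical argument with $\alpha_1$ replaced by $\alpha_1-1$ gives the bound $(\alpha_2-1)\beta_1+\alpha_1-1$. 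Setting $\alpha=\beta_1(\alpha_2-1)+\alpha_1$ makes the two cases coincide with Conditions~1 and~2 of the gadget definition, completing the argument.

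The main point requiring care is the achievability half, not the upper bound: the inequality $I'(R)\le(\alpha_2-1)\beta_1+I'(P)$ holds for every extension with no interaction between terms, but producing an extension that attains equality requires that the witnessing assignments of the $G_2$-fresh variables for different $i$ do not conflict. This is precisely why the copies of $G_2$ must be given private, renamed fresh variables, so that the per-$i$ optima can be realized simultaneously and independently. A minor secondary point is to confirm that applying $G_2$ to the weighted constraint $\wcla{w_i}{g_i}$ and scaling all weights by $w_i$ preserves the gadget inequalities with both sides multiplied by $w_i$; this follows directly from linearity of $I'(\cdot)$ in the weights.
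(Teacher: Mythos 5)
Your proof is correct and follows essentially the same route as the paper's: a case split on whether the original constraint is satisfied, with the per-constraint contribution of the second gadget ($\alpha_2$ per unit of satisfied weight, $\alpha_2-1$ per unit of falsified weight) summed to give $(\alpha_2-1)\beta_1+\alpha_1$ versus $(\alpha_2-1)\beta_1+\alpha_1-1$. The only difference is one of rigor, in your favor: you make explicit the upper-bound direction (via $I'(Q_i)\le\alpha_2-1+I'(g_i)$ and linearity) and the need for pairwise disjoint fresh variables and weight scaling, points the paper's sketch leaves implicit.
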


\begin{proof}
The first gadget multiplies the total weight of constraints by $\beta_1$, and the second by $\beta_2$. Therefore, the composition multiplies it by $\beta=\beta_1\beta_2$.

For any assignment, if the original constraint is falsified, the optimal extension after the first gadget satisfies constraints with a weight of $\alpha_1-1$, and falsifies the rest $\beta_1-(\alpha_1-1)$. The second gadget satisfies constraints for a weight of $\alpha_2-1$ of the falsified plus $\alpha_2$ of the satisfied. Therefore, the composition satisfies constraints with a total weight $(\alpha_2-1)(\beta_1-(\alpha_1-1))+\alpha_2(\alpha_1-1)= \beta_1(\alpha_2-1)+\alpha_1-1$.

If the original constraint is satisfied, the optimal extension after the first gadget satisfies $\alpha_1$ and falsifies the rest $\beta_1-\alpha_1$. After the second gadget the weight of satisfied constraints is $(\alpha_2-1)(\beta_1-\alpha_1)+\alpha_2\alpha_1=\beta_1(\alpha_2-1)+\alpha_1$.

The difference between both situations is one, hence the composition is a gadget, and $\alpha=\beta_1(\alpha_2-1)+\alpha_1$.
\end{proof}

\begin{lemma}
If $P$ is translated into $P'$ using a $(\alpha,\beta)$-gadget, then
\[
\begin{array}{l}
\opt(P')=(\alpha-1)\weight(P)+\opt(P)\\
\cost(P')=(\beta-\alpha)\weight(P)+\cost(P)
\end{array}
\]
\end{lemma}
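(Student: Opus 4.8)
The plan is to reduce everything to two facts: that the fresh variables introduced by the gadget for different constraints are disjoint, so that the optimal extension of an assignment can be computed constraint by constraint, and the identity $\opt(Q)+\cost(Q)=\weight(Q)$ established in the Preliminaries.

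First I would fix notation. Write $P=\{\wcla{v_j}{f_j(\vec x)}\}_{j=1,\dots,N}$ over the original variables $V$. Applying the gadget to each $f_j$ produces a problem on $V$ together with a private tuple of fresh variables $\vec a_j$, of total weight $\beta$; scaling by the weight $v_j$ of the $j$-th constraint and taking the union yields $P'$, whose fresh blocks $\vec a_1,\dots,\vec a_N$ are pairwise disjoint. Summing weights immediately gives $\weight(P')=\sum_j v_j\beta=\beta\,\weight(P)$, which I will use at the end.

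Next I would compute, for a fixed assignment $I$ to $V$, the largest value of $P'$ attainable by extending $I$ to the fresh variables. Because the blocks $\vec a_j$ are disjoint, the extension can be optimized independently in each block. By the defining property of the gadget, the $j$-th block contributes at most $v_j\alpha$ when $I(f_j(\vec x))=1$ and at most $v_j(\alpha-1)$ when $I(f_j(\vec x))=0$, and both bounds are attainable. Summing and separating the two cases gives the best-extension value $(\alpha-1)\sum_j v_j+\sum_{j:\,I(f_j(\vec x))=1} v_j=(\alpha-1)\weight(P)+I(P)$, since $I(P)$ is exactly the weight of the constraints of $P$ satisfied by $I$.

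The last step is to maximize over $I$. Every assignment $I'$ of $P'$ restricts to an assignment $I$ of $V$ and satisfies $I'(P')\le(\alpha-1)\weight(P)+I(P)$, with equality attained by the optimal extension; partitioning all assignments $I'$ by their restriction to $V$ therefore yields $\opt(P')=\max_I\bigl((\alpha-1)\weight(P)+I(P)\bigr)=(\alpha-1)\weight(P)+\opt(P)$, which is the first identity. For the second, I would invoke $\opt(P')+\cost(P')=\weight(P')$ together with $\opt(P)+\cost(P)=\weight(P)$ and substitute $\weight(P')=\beta\,\weight(P)$, obtaining $\cost(P')=\beta\,\weight(P)-(\alpha-1)\weight(P)-\opt(P)=(\beta-\alpha)\weight(P)+\cost(P)$. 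The only real subtlety, and the step I would state most carefully, is the disjointness of the fresh-variable blocks that licenses the per-constraint optimization; everything else is bookkeeping.
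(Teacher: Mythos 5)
Your proof is correct and complete: the per-constraint optimization licensed by the disjointness of the fresh-variable blocks gives $\opt(P')=(\alpha-1)\weight(P)+\opt(P)$, and the cost identity then follows from $\opt+\cost=\weight$ together with $\weight(P')=\beta\,\weight(P)$, exactly as you argue. Note that the paper states this lemma with no proof at all, so there is nothing to compare against; your argument is the natural one the authors leave implicit, and it matches the style of reasoning (optimal extensions computed constraint by constraint) used in their proof of the composition result, Lemma~\ref{lem-concatenation}.
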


Previous lemma allows us to obtain an algorithm to maximize $P$, using an algorithm to maximize $P'$. If this algorithm is an approximation algorithm on the optimum, in order to get the minimal error, we are interested in gadgets with minimal $\alpha$. That's because, traditionally, a gadget is called \emph{optimal} if it minimizes $\alpha$. In our case, we will use proof systems that derive empty clauses, i.e. that prove lower bounds for the cost. Therefore, in our case, we will be interested in gadgets that minimize $\beta-\alpha$.

In the case that $P$ is a decision problem, then we can say that $P$ is unsatisfiable if, and only if, $\cost(P') \geq (\beta-\alpha)\weight(P)+1$.

\section{Some Simple Gadgets}\label{sec:simple-gadgets}

As we described in the introduction, there exists a simple translation from OR clauses to XOR constraints that does not need to introduce new variables. In particular, for binary clauses it is as follows.

\begin{lemma}\label{lem:Max2SAT->Max2XOR}
The following set of transformations:
\[
\begin{array}{l@{\ \to\ }l}
{x} & \{\wcla{1}{x=1}\}\\
{\neg x} & \{\wcla{1}{x=0}\}\\
{x\vee y} & \{\wcla{1/2}{x=1},\ \wcla{1/2}{y=1},\ \wcla{1/2}{x\oplus y=1}\}\\
{x\vee \neg y} & \{\wcla{1/2}{x=1},\ \wcla{1/2}{y=0},\ \wcla{1/2}{x\oplus y=0}\}\\
{\neg x\vee y} & \{\wcla{1/2}{x=0},\ \wcla{1/2}{y=1},\ \wcla{1/2}{x\oplus y=0}\}\\
{\neg x\vee\neg y} & \{\wcla{1/2}{x=0},\ \wcla{1/2}{y=0},\ \wcla{1/2}{x\oplus y=1}\}
\end{array}
\]
defines a $(1,3/2)$-gadget from Max2SAT to Max2XOR.
\end{lemma}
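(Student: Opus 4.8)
The plan is to verify the gadget definition directly, exploiting the fact that these six transformations introduce \emph{no} fresh variables, so the auxiliary set $\vec a$ is empty and every ``extension'' $I'$ of an assignment $I$ to the clause variables coincides with $I$ itself. Under this simplification, conditions (1) and (2) of the gadget definition collapse to a single pointwise statement: for each transformation it suffices to show that $I(P)=\alpha=1$ for every assignment $I$ satisfying the source clause and $I(P)=\alpha-1=0$ for every assignment falsifying it. The existential parts (``there exists an extension attaining the bound'') are then automatic, since the unique extension already realises the value. In parallel I would record the weight condition $\sum_i w_i=\beta$, which for the four binary transformations reads $1/2+1/2+1/2=3/2$, as required.

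Before computing I would cut down the casework using the symmetry of the construction under literal negation. Flipping the polarity of $x$ turns $x\vee y$ into $\neg x\vee y$ and simultaneously relabels the produced Max2XOR constraints, namely $x=1\leftrightarrow x=0$ and $x\oplus y=1\leftrightarrow x\oplus y=0$; since this relabelling exactly matches the renaming of assignments $I(x)\mapsto 1-I(x)$, it preserves the truth value of each constraint as well as of the source clause, and likewise for flipping $y$. Hence the four binary cases are all equivalent under this relabelling, and it is enough to check one representative together with the two unit cases.

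For the representative $x\vee y\mapsto\{\wcla{1/2}{x=1},\ \wcla{1/2}{y=1},\ \wcla{1/2}{x\oplus y=1}\}$ I would simply enumerate the four assignments to $(x,y)$. On $(0,0)$, the unique falsifying assignment, none of the three $2$XOR constraints holds, so $I(P)=0=\alpha-1$; on each of $(1,0)$, $(0,1)$ and $(1,1)$ exactly two of the three constraints hold, so $I(P)=2\cdot\tfrac12=1=\alpha$. This is precisely the count already anticipated in the introduction. The unit transformations $x\mapsto\{\wcla{1}{x=1}\}$ and $\neg x\mapsto\{\wcla{1}{x=0}\}$ are immediate: the single constraint has value $1$ when the literal is true and $0$ otherwise.

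The computation is a finite check rather than a genuine obstacle; the one point that needs care is the weight bookkeeping. Each binary transformation sums to exactly $\beta=3/2$, and the verification above gives $I(P)\in\{\alpha,\alpha-1\}$ on the nose, so the parameter $(1,3/2)$ is dictated by the binary clauses, which are the cases of interest. The unit transformations also meet the optimal-value requirements $I(P)=1$ (literal true) and $I(P)=0$ (literal false), so the condition $\alpha=1$ holds for them as well; they are the degenerate instances of the family, carrying total weight $1$ rather than $3/2$ (one may pad them to weight $3/2$ with a constantly falsified constraint of weight $1/2$ if a uniform $\beta$ is wanted). In short, the whole statement reduces to the single four-row truth table for $x\vee y$, the negation symmetry, and these trivial observations.
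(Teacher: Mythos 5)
Your proposal is correct and matches the paper's own justification: the paper states this lemma without a formal proof, relying on exactly the counting argument sketched in its introduction (for $x\vee y$, a satisfying assignment makes exactly two of the three weight-$1/2$ constraints true, giving value $1$, while the falsifying assignment $(0,0)$ satisfies none), which is precisely your truth-table check plus the observation that no fresh variables are introduced. Your polarity-flipping symmetry and your remark on the non-uniform total weight of the unit-clause cases (weight $1$ rather than $\beta=3/2$, fixable by padding) are small refinements beyond what the paper makes explicit, but the substance is the same.
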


Notice that $x_1\oplus\cdots\oplus x_n=0$ and $x_1\oplus\cdots\oplus x_n=1$ are incompatible constraints, hence they can be canceled. Notice also that the order of the variables is irrelevant and $x\oplus x\oplus C = C$. Therefore, we can simplify any PC problem to get an equivalent problem where, for every subset of variables, there is only one constraint.

\begin{example}\label{ex1}
Given the Max2SAT problem:
\[
\left\{
\arraycolsep 3mm
\begin{array}{llll}
\wcla{1}{y} & \wcla{2}{x\vee y}          & \wcla{2}{y\vee \neg z} &\wcla{1}{x\vee z}\\
            & \wcla{1}{\neg x\vee\neg y}      & \wcla{3}{\neg y\vee z} & \wcla{2}{\neg x\vee\neg z}\\
            & \wcla{1}{x\vee\neg y} &                        &\wcla{3}{\neg x\vee z}
\end{array}
\right\}
\]
the gadget described in Lemma~\ref{lem:Max2SAT->Max2XOR}, and its simplification results into the Max2XOR problem:
\[
\left\{
\arraycolsep 5mm
\begin{array}{ll}
     \wcla{1}{x=0}& \wcla{1}{x\oplus y=1}\\
     \wcla{1/2}{y=1}   & \wcla{5/2}{y\oplus z=0}\\
     \wcla{3/2}{z=1}
\end{array}
\right\}
\]
\end{example}


We can go further and reduce Max2XOR to MaxCUT. Since the constraint family MaxCUT is a subset of the constraint family Max2XOR, the most natural is to make the translation as follows.

\begin{lemma}\label{lem:Max2XOR->MaxCUT}
Given a Max2XOR problem, adding a particular variable, called $\hat 0$, and applying the following transformations:
\[
\begin{array}{l@{\ \to\ }l}
x=1 & \{x\oplus \hat 0 = 1\}\\
x=0 & \{x\oplus a = 1,\ a\oplus \hat 0=1\}\\
x\oplus y = 1 & \{x\oplus y = 1\}\\
x\oplus y = 0 & \{x\oplus b = 1,\ b\oplus y = 1\}
\end{array}
\]
where $a$ and $b$ are fresh variables,
we get a $(2,2)$-gadget from Max2XOR to MaxCUT.
\end{lemma}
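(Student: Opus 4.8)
The plan is to prove the four rules realise the required behaviour by first fixing a convention on the global vertex $\hat 0$ and then running a small case analysis on each rule. The first thing I would address is that $\hat 0$ is not a per-constraint fresh variable but a single vertex shared by the whole translation, so the gadget has to be understood with $\hat 0$ pinned down. MaxCUT is invariant under globally complementing an assignment: for any $J$ and its bitwise complement $\bar J$, every constraint $u\oplus v=1$ satisfies $\bar J(u\oplus v)=(1-J(u))\oplus(1-J(v))=J(u)\oplus J(v)$, so $J$ and $\bar J$ cut exactly the same edges. Hence some optimal assignment sets $\hat 0=0$, and I may assume $\hat 0=0$ throughout, reading ``side $0$'' as \emph{false}. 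Under this convention $x\oplus\hat 0=1$ is cut iff $x=1$, which is exactly what lets the literal rules encode the unary constraints.

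Second, I would verify each rule by a truth table over its one auxiliary variable. The rules that keep a single edge ($x=1$ and $x\oplus y=1$) are immediate: the edge is cut precisely when the source constraint holds. The interesting rules encode a ``same side'' requirement, which plain MaxCUT cannot state directly, via a midpoint vertex. In $x=0\to\{x\oplus a=1,\ a\oplus\hat 0=1\}$ the vertex $a$ can be placed on the side opposite to both $x$ and $\hat 0$ exactly when $x$ and $\hat 0$ already agree (i.e. $x=0$), cutting both edges for a value of $2$; if $x=1$ then $x$ and $\hat 0$ disagree and at most one of the two edges is cut, giving $1$. The same analysis with $b$ in place of $a$ handles $x\oplus y=0\to\{x\oplus b=1,\ b\oplus y=1\}$, where the ``same side'' condition is $x=y$ and $\hat 0$ plays no role. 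So each two-edge rule has total weight $\beta=2$, best value $\alpha=2$ when the source constraint is satisfied, and best value $\alpha-1=1$ when it is falsified.

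Finally I would assemble these into the gadget bound and, via the earlier lemma, conclude that $\beta-\alpha=0$ makes the reduction cost-preserving, $\cost(P')=\cost(P)$, so that $P$ is unsatisfiable iff $\cost(P')\geq 1$. The step I expect to be delicate is reconciling the single-edge rules with the two-edge rules: as written, $x=1$ and $x\oplus y=1$ produce weight $1$ and realise the pair $(\alpha,\beta)=(1,1)$ rather than $(2,2)$, even though the cost-relevant quantity $\beta-\alpha=0$ and the unit gap between the satisfied and falsified cases already hold for all four. To present one genuinely uniform $(2,2)$-gadget I would pad each single-edge output with an always-cuttable dummy edge $c\oplus c'=1$ on two fresh vertices, which contributes exactly $1$ in every optimal extension and lifts $(1,1)$ to $(2,2)$ without disturbing the gap. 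Handling this normalisation together with the global treatment of $\hat 0$ is the main obstacle; once both are in place the four rules share the common parameters $(\alpha,\beta)=(2,2)$ and the claim follows.
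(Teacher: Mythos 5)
Your proposal is correct, but there is nothing in the paper to compare it against: the paper states this lemma \emph{without any proof}, following it only with the remark that, strictly speaking, the transformations do not define a gadget because $\hat 0$ is global to the whole problem rather than local to a constraint. Your argument supplies exactly what the paper omits, and it does so soundly: the complementation symmetry of MaxCUT (an assignment and its bitwise complement cut the same edges) is the right way to pin $\hat 0$ to the ``false'' side, after which the per-rule case analysis over the single auxiliary vertex ($a$ serving as a midpoint forcing $x$ and $\hat 0$ to agree, $b$ forcing $x$ and $y$ to agree) is complete and correct. You also caught a genuine imprecision that the paper does not acknowledge: the rules for $x=1$ and $x\oplus y=1$ produce total weight $1$ and realise $(\alpha,\beta)=(1,1)$, not $(2,2)$, so the lemma as stated is only uniformly true in the cost-relevant sense that $\beta-\alpha=0$ and the satisfied/falsified gap is $1$ for every rule; your padding with an always-cuttable fresh edge $c\oplus c'=1$ is a legitimate normalisation that makes the literal $(2,2)$ claim hold, though one could equally leave the rules heterogeneous and observe that $\cost(P')=\cost(P)$ follows regardless. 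In short, your proof is more careful than the paper's treatment: the paper flags only the globality of $\hat 0$, while you both resolve that issue (via the complementation argument) and repair the weight mismatch it silently glosses over.
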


Notice that, strictly speaking, the transformations described in Lemma~\ref{lem:Max2XOR->MaxCUT} do not define a gadget, since the auxiliary variable $0$ is not \emph{local} to a constraint, but global to the whole problem.

Alternatively, we can reduce the number of auxiliary variables and constraints if we add two special auxiliary variables called $\hat 0$ and $\hat 1$ and we use the transformations:
\[
\begin{array}{l@{\ \to\ }l}
x=1 & \{x\oplus \hat 0 = 1\}\\
x=0 & \{x\oplus \hat 1 = 1\}\\
& \{\wcla{W}{\hat 0 \oplus \hat 1 = 1}\}\\
x\oplus y = 1 & \{x\oplus y = 1\}\\
x\oplus y = 0 & \{x\oplus b = 1,\ b\oplus y = 1\}
\end{array}
\]
where, if $P$ is the original Max2XOR problem, $\displaystyle W=\min\left\{\sum_{\wcla{w}{x=0}\in P} w,\ \sum_{\wcla{w}{x=1}\in P} w\right\}$. 

The third constraint is added only once and depends on the whole original problem $P$. This special constraint ensures that optimal solutions $I$ of the resulting MaxCUT problem satisfy
$I(\hat 0)\neq I(\hat 1)$. Let $w_i = \sum_{\wcla{w}{x=i}\in P}w$, for $i=0,1$, i.e. $w_i$ is the sum of the weights of constraints of the form $\wcla{w}{x=i}$.  With this alternative transformation, we observe that, from a set of constraints of the form $x=0$ and $x=1$ with total weight $w_0+w_1$, we get a set of constraints of the form $x\oplus \hat 0 = 1$, $x\oplus \hat 1 = 1$ and $\hat 0 \oplus \hat 1 = 1$ with total weight $w_0+w_1+\min\{w_0,w_1\}\leq 3/2(w_0+w_1)$. 

\ignorar{
We can make a further transformation to optimize the reduction described in Lemma~\ref{lem-2PC->MaxCUT}. Given a Max2XOR problem, consider the weighted graph $G=(V,E_{blue},E_{red})$, where $V$ is the set of variables plus the special node $\hat 0$, and there are two kinds of edges: a blue edge between $x$ and $y$, for every constraint $x\oplus y=1$, a red edge between $x$ and $y$ ,for every constraint $x\oplus y=0$, a blue edge between $x$ and $\hat{0}$, for every constraint $x=1$, and a red edge between $x$ and $\hat{0}$, for every constraint $x=0$. Every edge with the weight of the corresponding constraint. By replacing variable $x$ by its negation $1-x'$, where $x'$ is a new variable, we replace constraints $x=i$ by $x' = 1-i$ and constraints $x\oplus y=i$ by $x'\oplus y=1-i$. Hence, we replace node $x$ by $x'$ and exchange the colors of all edges reaching this node. The following lemma ensures that we can find a subset of variables such that replacing them by their negation, we can ensure that the number of blue edges (or the sum of weights of blue edges) is bigger than the number of red edges.

\begin{lemma}\label{lem-two-colors}
Consider a graph with two kinds of edges (say blue and red). Consider an operation that selects a node and switch colors of all edges that affect this node. Applying this operation we can always get a graph where one color (say blue) is equally or more preponderant that the other (say red).
\end{lemma}
\begin{proof}
Assume that we want to get more blue than red nodes. We can select nodes with strictly more red edges than blue edges and switch them. This strictly increases the number of blue edges. Therefore, there are no cycling situations. When the number of blue edges cannot be increased by this strategy, we can ensure that all nodes are affected by the same or more blue edges than red edges. This ensures that in the graph there are the same or more blue edges than red edges.
\hfill\qed
\end{proof}

Given a Max2XOR problem $P$, for $i=0,1$, we define
$$
w_i = \sum_{\wcla{w}{x=i}\in P}w + \sum_{\wcla{w}{x\oplus y=i}\in P}w
$$

Notice that, in Lemma~\ref{lem-2PC->MaxCUT}, constraints of the form $x=0$ or $x\oplus y=0$ results into two edges of the resulting MaxCUT problem, and finally into a $(2,2)$-gadget. Whereas, constraints of the form $x=1$ or $x\oplus y=0$ result into only one edge, i.e. into a $(1,1)$-gadget. This means that we can try to minimize the number (or the weight) of equal-to-zero PC constraints, i.e. $w_0$.
Lemma~\ref{lem-two-colors} allows us to translate any 2PC problem into an equivalent 2PC problem where $w_1>w_0$, while preserving $w_0+w_1$. We can define an \emph{average} gadget where $\alpha= \frac{w_0\,2+w_1\,1}{w_0+w_1}$ and $\beta= \frac{w_0\,2+w_1\,1}{w_0+w_1}$. These average $\alpha$ and $\beta$ have the same properties as traditional $\alpha$ and $\beta$ in order to compute $r$-approximability.

\begin{corollary}\label{cor-2PC->MaxCUT}
For any Max2XOR problem $P$, there exists a set of variables $V'$ such that, replacing $x$ by $1-x'$, for all $x\in V'$, results into an equivalent Max2XOR problem $P'$ where $w'_1 \geq w'_0$ and $w_0+w_1 = w'_0+w'_1$. 

The transformations described in Lemma~\ref{lem-2PC->MaxCUT} applied to this equivalent Max2XOR problem, define an average $(3/2,3/2)$-gadget.
\end{corollary}

\begin{example}
Consider the Max2XOR obtained in Example~\ref{ex1}:
\begin{center}
\psset{xunit=20mm,yunit=7mm}
\begin{pspicture}(0,0)(2,2)
\rput(0,0){\ovalnode{x}{$x$}}
\rput(1,0){\ovalnode{y}{$y$}}
\rput(2,0){\ovalnode{z}{$z$}}
\rput(1,2){\ovalnode{0}{$\hat{0}$}}
\psset{linecolor=blue}
\ncline{0}{z}\naput{$3/2$}
\ncline{y}{0}\naput{$1/2$}
\ncline{x}{y}\naput{$1$}
\psset{linecolor=red, linestyle=dashed}
\ncline{y}{z}\naput{$5/2$}
\ncline{x}{0}\naput{$1$}
\end{pspicture}
\hspace{10mm}
\raisebox{8mm}{$
\left\{
\arraycolsep 3mm
\begin{array}{ll}
     \wcla{1}{x=0}     & \wcla{1}{x\oplus y=1}\\
     \wcla{1/2}{y=1}   & \wcla{5/2}{y\oplus z=0}\\
     \wcla{3/2}{z=1}
\end{array}
\right\}
$}
\end{center}

We have $w_0=7/2$ and $w_1=3$ (hence $w_0>w_1$). By selecting the variables $V'=\{x,y\}$, and replacing them by their negations, we transform this problem into the following equivalent problem:

\begin{center}
\psset{xunit=20mm,yunit=7mm}
\begin{pspicture}(0,0)(2,2)
\rput(0,0){\ovalnode{x}{$x'$}}
\rput(1,0){\ovalnode{y}{$y'$}}
\rput(2,0){\ovalnode{z}{$z$}}
\rput(1,2){\ovalnode{0}{$\hat{0}$}}
\psset{linecolor=blue}
\ncline{x}{0}\naput{$1$}
\ncline{0}{z}\naput{$3/2$}
\ncline{x}{y}\naput{$1$}
\ncline{y}{z}\naput{$5/2$}
\psset{linecolor=red, linestyle=dashed}
\ncline{y}{0}\naput{$1/2$}
\end{pspicture}
\hspace{10mm}
\raisebox{8mm}{$
\left\{
\arraycolsep 3mm
\begin{array}{ll}
     \wcla{1}{x'=1}& \wcla{1}{x'\oplus y'=1}\\
     \wcla{1/2}{y'=0}   & \wcla{5/2}{y'\oplus z=1}\\
     \wcla{3/2}{z=1}
\end{array}
\right\}
$}
\end{center}

\noindent
where $w_0 =1/2 < 6 = w_1.$
This problem can be translated then into the following MaxCUT problem:

\begin{center}
\psset{xunit=30mm,yunit=10mm}
\begin{pspicture}(0,0)(2,2)
\rput(0,0){\ovalnode{x}{$x'$}}
\rput(1,0){\ovalnode{y}{$y'$}}
\rput(2,0){\ovalnode{z}{$z$}}
\rput(1,2){\ovalnode{0}{$\hat{0}$}}
\rput(1,1){\ovalnode{a}{$a$}}
\ncline{x}{y}\naput{$1$}
\ncline{y}{z}\naput{$5/2$}
\ncline{x}{0}\naput{$1$}
\ncline{0}{z}\naput{$3/2$}
\ncline{y}{a}\naput{$1/2$}
\ncline{a}{0}\naput{$1/2$}
\end{pspicture}
\end{center}

\end{example}
}

\section{From MaxSAT to Max2XOR}\label{sec:MaxSAT->Max2XOR}

An easy way to reduce MaxSAT to Max2XOR is to reduce MaxSAT to Max3SAT, this to Max2SAT, and this to Max2XOR using Lemma~\ref{lem:Max2SAT->Max2XOR}. We can reduce MaxSAT to Max3SAT with the following transformation:
\begin{equation}
\begin{array}{ll}
x_1\vee\cdots\vee x_k \to \{ x_1\vee x_2\vee b_1,\, &\neg b_1\vee x_3\vee b_2,\cdots,\\
&\neg b_{k-4}\vee x_{k-2}\vee b_{k-3},\,\neg b_{k-3}\vee x_{k-1}\vee x_k\}
\end{array}
\label{eq-MaxSAT->Max3SAT}
\end{equation}
that defines a $(k-2,k-2)$-gadget from MaxEkSAT to MaxE3SAT.

Later, we can use the following $(3.5,4)$-gadget from Max3SAT to Max2SAT defined by~Trevisan et al.~\cite{gadgets}:
\begin{equation}
\begin{array}{ll}
\wcla{1}{x_1\vee x_2\vee x_3} \to \{&
\wcla{1/2}{x_1\vee x_3}, 
\ \wcla{1/2}{\neg x_1\vee \neg x_3},\\
&\wcla{1/2}{x_1\vee \neg b},
\ \wcla{1/2}{\neg x_1\vee b},\\
&\wcla{1/2}{x_3\vee \neg b},
\ \wcla{1/2}{\neg x_3\vee b},\\
&\wcla{1}{x_2\vee b}\}
\end{array}
\label{eq-Max3SAT->Max2SAT}
\end{equation}

The concatenation of Trevisan's gadget~(\ref{eq-Max3SAT->Max2SAT}) and the gadget in Lemma~\ref{lem:Max2SAT->Max2XOR} results into the $(2,3)$-gadget described in Figure~\ref{fig:Max3SAT->Max2XOR}. 
\begin{figure}[t]
    \centering
\includegraphics[page=1]{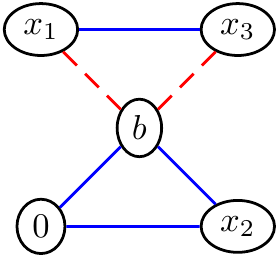}
\ignorar{
    \begin{pspicture}(0,0)(2,2)
    \rput(0,0){\ovalnode{0}{$0$}}
    \rput(0,2){\ovalnode{1}{$x_1$}}
    \rput(2,0){\ovalnode{2}{$x_2$}}
    \rput(2,2){\ovalnode{3}{$x_3$}}
    \rput(1,1){\ovalnode{b}{$b$}}
    \psset{linecolor=blue}
    \ncline{-}{0}{2}
    \ncline{-}{0}{b}
    \ncline{-}{2}{b}
    \ncline{-}{1}{3}
    \psset{linecolor=red, linestyle=dashed}
    \ncline{b}{1}
    \ncline{b}{3}
    \end{pspicture}
    }
    \hspace{30mm}
    $
    \begin{array}[b]{l}
    x_1 \oplus x_3 = 1\\
    x_1 \oplus b = 0\\
    x_3 \oplus b = 0\\
    x_2 \oplus b = 1\\
    b  = 1\\
    x_2 = 1\\
    \end{array}
    $
    \caption{Graphical representation of the $(2,3)$-gadget reducing Max3SAT to Max2XOR and based on \cite{gadgets} gadget from Max3SAT to Max2SAT. All constraints have weight $1/2$. Blue edges represent equal-one constraints, and red dashed edges equal-zero constraints.}
    \label{fig:Max3SAT->Max2XOR}
\end{figure}

\subsection{A Sequential Translation}

Trevisan's gadget~\cite{gadgets} is optimal. However, when concatenated with gadget~(\ref{eq-MaxSAT->Max3SAT}) results into a $(2(k-2),\,3(k-2))$-gadget from MaxEkSAT to Max2XOR, that is not optimal (in the sense that it is not the one with minimal $\alpha$). In the following we describe a better and direct reduction from MaxEkSAT to Max2XOR.

The reduction is based on a function $T^0$ that takes as parameters a SAT clause and a variable. The use of this variable is for technical reasons, to make simpler the recursive definition and proof of Lemma~\ref{lem:T0}. Later (see Theorem~\ref{thm-MaxSAT->PC}), it will be replaced by the constant one.

\begin{definition}
Given a SAT clause $C$ and a variable $b$, the translation function $T^0(C,b)$ is defined recursively as the set of 2XOR clauses
\[
T^0(x_1\vee x_2,b) = \{\wcla{1/2}{x_1\oplus x_2=1},\  \wcla{1/2}{x_1\oplus b=0},\  \wcla{1/2}{b\oplus x_2=0}\}
\]
for binary clauses, and 
\[
T^0(x_1\vee\cdots\vee x_k,b) = T^0(x_1\vee\cdots\vee x_{k-1},b')\cup\{\wcla{1/2}{b'\oplus x_k=1},\ \wcla{1/2}{b'\oplus b=0},\ \wcla{1/2}{b\oplus x_k=0}\}
\]
for $k\geq 3$, where $b'$ is a fresh variable not occurring elsewhere.
\end{definition}

\begin{figure}[t]
\centering
\includegraphics[page=2]{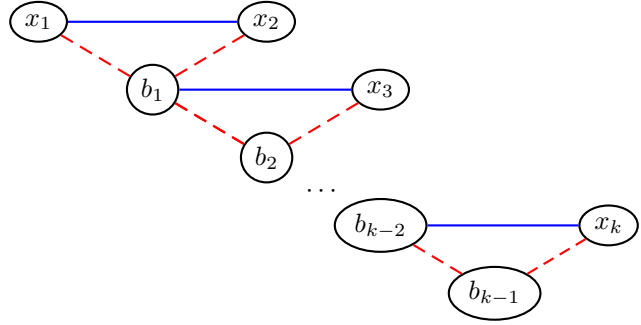}
\ignorar{
\psset{xunit=15mm,yunit=9mm}
\begin{pspicture}(0,0)(5,4)
\rput(0,4){\ovalnode{x1}{$x_1$}}
\rput(1,3){\ovalnode{b1}{$b_1$}}
\rput(2,4){\ovalnode{x2}{$x_2$}}
\rput(2,2){\ovalnode{b2}{$b_2$}}
\rput(3,3){\ovalnode{x3}{$x_3$}}
\rput(3,1){\ovalnode{bn-2}{$b_{k-2}$}}
\rput(4,0){\ovalnode{bn-1}{$b_{k-1}$}}
\rput(5,1){\ovalnode{xn}{$x_k$}}
\rput(2.5,1.5){$\cdots$}
    \psset{linecolor=blue}
    \ncline{-}{x1}{x2}
    \ncline{-}{b1}{x3}
    \ncline{-}{bn-2}{xn}
    \psset{linecolor=red, linestyle=dashed}
    \ncline{-}{b1}{b2}
    \ncline{-}{b1}{b2}
    \ncline{-}{b1}{x1}
    \ncline{-}{b1}{x2}
    \ncline{-}{b2}{x3}
    \ncline{-}{bn-1}{xn}
    \ncline{-}{bn-1}{bn-2}
\end{pspicture}
}
\caption{Graphical representation of $T^0(x_1\vee\cdots\vee x_k, b_{k-1})$. Blue edges represent 2PC$_1$ constraints and red dashed edges 2PC$_0$ constraints, all them with weight $1/2$. }\label{fig-MaxSAT->PC}
\end{figure}

\begin{lemma}\label{lem:T0}
Consider the Max2XOR problem $T^0(x_1\vee\cdots\vee x_k, b_{k-1})$. We have:
\begin{enumerate}
\item For any assignment $I:\{x_1,\dots,x_k\}\to\{0,1\}$,
the extension of the assignment as $I'(b_{i})=I(x_{i+1})$, for $i=1,\dots,k-1$, maximizes the number of satisfied 2XOR constraints, that is $2\,(k-1)$.
\item For any assignment $I:\{x_1,\dots,x_k,b_{k-1}\}\to\{0,1\}$ satisfying $I(b_{k-1})=1$, the extension of the assignment  as
\[
I'(b_{i})=\left\{
\begin{array}{ll}
1 & \mbox{if $I(x_{i+2})=\cdots=I(x_k)=0$}\\[2mm]
I(x_{i+1}) &\mbox{otherwise}
\end{array}
\right.
\]
for $i=1,\dots,k-2$, maximizes the number of satisfied 2XOR constraints, that is $2\,(k-2)$, if $I(x_1)=\cdots=I(x_k)=0$, or $2\,(k-1)$ otherwise.
\end{enumerate}
\end{lemma}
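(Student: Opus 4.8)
The plan is to sidestep the recursion entirely and argue directly from the geometry of $T^0(x_1\vee\cdots\vee x_k,b_{k-1})$. The key structural observation is that its $3(k-1)$ constraints partition into $k-1$ pairwise constraint-disjoint \emph{triangles}: the base triangle $\{x_1\oplus x_2=1,\ x_2\oplus b_1=0,\ b_1\oplus x_1=0\}$ on the cycle $x_1,x_2,b_1$, and for each $j=3,\dots,k$ the triangle $\{b_{j-2}\oplus x_j=1,\ x_j\oplus b_{j-1}=0,\ b_{j-1}\oplus b_{j-2}=0\}$ on the cycle $b_{j-2},x_j,b_{j-1}$. In each triangle the three right-hand sides sum to $1$ (odd); since the XOR of the three left-hand sides around any cycle is $0$, no assignment satisfies all three, so each triangle contributes at most $2$ satisfied constraints. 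Hence, for \emph{any} extension $I'$, at most $2(k-1)$ constraints hold. Because all weights equal $1/2$, maximizing the count of satisfied constraints is the same as maximizing $I'(P)$, so I may reason throughout with counts. This single bound already gives the upper bounds in part (1) and in the non-all-zero subcase of part (2).

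Next I would establish achievability for parts (1) and (2) by substituting the prescribed extensions triangle-by-triangle and checking that exactly two constraints hold in each. For part (1), with $b_i=x_{i+1}$ every triangle contains one constraint of the form $y\oplus y=0$, which always holds, together with a complementary pair (one asserting $=1$, one asserting $=0$ on the same pair of variables) of which exactly one holds; this gives $2$ per triangle, hence $2(k-1)$, matching the bound. For the non-all-zero part of (2) I would let $p$ be the largest index with $I(x_p)=1$, note that the rule sets $b_i=x_{i+1}$ for $i\le p-2$ and $b_i=1$ for $i\ge p-1$ (with $b_{k-1}=1$ given), and split each triangle according to whether its two $b$-indices both lie below $p-1$, straddle it, or both lie at or above it; in all three regimes, and for the base triangle handled by the same case split on $p$, exactly two constraints hold, so $2(k-1)$ is again reached.

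The delicate point, and the main obstacle, is the \emph{all-zero} subcase of part (2), where the target $2(k-2)$ lies strictly below the generic triangle bound $2(k-1)$, so the partition alone is too weak. Here I would substitute $x_1=\dots=x_k=0$ and $b_{k-1}=1$ and prove two things. First, after this substitution every triangle contributes only $0$ or $2$: the base triangle has $x_1\oplus x_2=1$ falsified and its other two constraints both collapse to $b_1=0$; the last triangle ($j=k$) has $b_{k-1}\oplus x_k=0$ falsified and its other two both collapse to $b_{k-2}=1$; and each intermediate triangle reduces to $\{b_{j-2}=1,\ b_{j-2}=b_{j-1},\ b_{j-1}=0\}$, which is satisfied twice unless $(b_{j-2},b_{j-1})=(0,1)$, in which case zero times. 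Second, attaining $2$ in every triangle at once would force $b_1=0$, $b_{k-2}=1$, and no adjacent pair $(b_i,b_{i+1})=(0,1)$ for $1\le i\le k-3$; but a $0/1$-sequence $b_1,\dots,b_{k-2}$ starting at $0$ and ending at $1$ must contain a $0\to1$ transition, a contradiction. Since all contributions are even, the total drops by at least $2$, giving $2(k-2)$; and the prescribed extension, which sets every $b_i=1$ here, realizes it (base $0$, each of the remaining $k-2$ triangles $2$). The small cases $k=3$, where $b_1=b_{k-2}$ makes the conflict immediate, and $k=2$, where there is no extension to choose, are checked directly.
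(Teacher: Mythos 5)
Your proof is correct, but it takes a genuinely different route from the paper's. The paper proves both statements by induction on $k$, mirroring the recursive definition of $T^0$: the inductive step peels off the last triangle $\{b_{k-2}\oplus x_k=1,\ b_{k-2}\oplus b_{k-1}=0,\ b_{k-1}\oplus x_k=0\}$, argues locally about the best choice of $b_{k-2}$ (in the second statement via a case split on $I(x_k)$, with the $I(x_k)=1$ case falling back on the first statement), and invokes the induction hypothesis on the remaining constraints. You instead argue globally and non-recursively: you decompose the $3(k-1)$ constraints into $k-1$ constraint-disjoint triangles, obtain the uniform upper bound $2(k-1)$ from the parity obstruction (odd right-hand-side sum around a cycle of XORs), verify achievability of the prescribed extensions triangle by triangle, and, for the delicate all-zero subcase of statement~2, show that after substitution each triangle scores $0$ or $2$ and that scoring $2$ everywhere would force a $0/1$ sequence $b_1,\dots,b_{k-2}$ starting at $0$, ending at $1$, with no $0\to 1$ transition --- a contradiction, which pins the maximum at $2(k-2)$. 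Notably, your per-triangle bound is exactly the argument the paper itself uses later, in the second statement of Lemma~\ref{lem:parallel}, for the more general parallel translation $T$; your proof in effect anticipates that generalization. As for what each approach buys: the paper's induction is compact and transfers directly to the case analysis needed for $T$; your argument is more self-contained and in one respect sharper, since it establishes the upper bounds ($2(k-1)$ in general, $2(k-2)$ in the all-zero case) for \emph{every} extension rather than only for the constructed one, whereas the paper's inductive step for statement~2 leans on a local greedy claim (``$b_{k-2}=1$ satisfies at least two, $b_{k-2}=0$ at most two'') whose global optimality is asserted rather than spelled out. Your explicit handling of the degenerate cases $k=2$ (empty extension) and $k=3$ ($b_1=b_{k-2}$) is also a welcome addition that the paper's base case covers only implicitly.
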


\begin{proof}
For the first statement: the proof is by induction on $k$. 

For $k=2$, we never can satisfy the three constraints $T^0(x_1\vee x_2,b_1) = \{x_1\oplus x_2=1,\ x_1\oplus b_1=0,\ b_1\oplus x_2=0\}$. We can satisfy two of them if we set $b_1$ equal to $x_1$ or to $x_2$ (we choose the second possibility).

For $k>2$, we have:
\[
T^0(x_1\vee\cdots\vee x_k,b_{k-1}) = \begin{array}[t]{l}
T^0(x_1\vee\cdots\vee x_{k-1},b_{k-2})\cup\\[2mm]
\{\wcla{1/2}{b_{k-2}\oplus x_k=1},\ \wcla{1/2}{b_{k-2}\oplus b_{k-1}=0},\ \wcla{1/2}{b_{k-1}\oplus x_k=0}\}
\end{array}
\]
We can never satisfy all three constraints $\{b_{k-2}\oplus x_k=1,\ b_{k-2}\oplus b_{k-1}=0,\ b_{k-1}\oplus x_k=0\}$. By setting $I'(b_{k-1})=I(x_k)$, it does not matter what value $b_{k-2}$ gets, we always satisfy two of them. By induction hypothesis, we know that setting $I'(b_i)=I(x_{i+1})$, for $i=1,\dots,k-2$, we maximize the number of satisfied constraints, i.e. $2(k-2)$, from the rest of constraints in $T^0(x_1\vee\dots\vee x_{k-1},b_{k-2})$. Therefore, we can satisfy a maximum of $2(k-1)$ constraints from $T^0(x_1\vee\dots\vee x_k,b_{k-1})$.

For the second statement: the proof is also by induction on $k$. 

For $k=2$ and $I(b_1)=0$, the number of satisfied constraints is zero, if $I(x_1)=I(x_2)=0$, and two, otherwise.

For $k>2$, there are two cases: 

If $I(x_k)=0$ and $I(b_{k-1})=1$, assigning $I(b_{k-2})=1$ we maximize the number of satisfied constraints involving $b_{k-2}$, satisfying at least two of them, whereas assigning $I(b_{k-2})=0$, we can satisfy at most two of them. Hence, we take the first option. Then, we use the induction hypothesis to prove that we can satisfy other $2\,(k-2)$ constraints if at least some of the $x_i$ is one, for $i=1,\dots,k-1$, or only $2\,(k-3)$, if all them are zero.

If $I(x_k)=1$ and $I(b_{k-1})=1$, it does not matter what value $b_{k-2}$ gets, we satisfy two constraints from $\{b_{k-2}\oplus x_k=1,\ b_{k-2}\oplus b_{k-1}=0,\ b_{k-1}\oplus x_k=0\}$. We use then the first statement of the lemma, and the same assignment for $b$'s, to prove that the proposed assignment $I'$ maximizes the satisfaction of the rest of constraints, i.e. $2\,(k-2)$ of them.
\end{proof}

\begin{theorem}\label{thm-MaxSAT->PC}
The translation of every clause $x_1\vee\cdots\vee x_k$ into $T^0(x_1\vee\cdots\vee x_k, \hat{1})$ defines a $(k-1,3/2(k-1))$-gadget from MaxEkSAT to Max2XOR.
\end{theorem}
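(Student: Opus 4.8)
The plan is to read the theorem off directly from the second statement of Lemma~\ref{lem:T0}, because substituting the constant $\hat 1$ for the final output variable is exactly the specialization $I(b_{k-1})=1$ treated there. Writing $P = T^0(x_1\vee\cdots\vee x_k,\hat 1)$, the genuine variables of $P$ are $x_1,\dots,x_k$ together with the fresh variables $b_1,\dots,b_{k-2}$ produced by the recursion; the variable $b_{k-1}$ of Lemma~\ref{lem:T0} has been replaced by the constant $1$, so it no longer occurs as a free variable (its occurrences collapse to unary constraints such as $b_{k-2}=1$ and $x_k=1$, which are legal Max2XOR constraints). Since $b_1,\dots,b_{k-2}$ are local to the clause, $P$ is a strict gadget with auxiliary variables $\vec a=(b_1,\dots,b_{k-2})$.

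First I would compute $\beta$. The base case $T^0(x_1\vee x_2,b)$ consists of three constraints of weight $1/2$, and each step of the recursion from $k-1$ to $k$ adjoins exactly three more constraints of weight $1/2$; hence $P$ contains $3(k-1)$ constraints of weight $1/2$, so $\beta=\weight(P)=\frac{3}{2}(k-1)$, as claimed.

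Next I would establish that $\alpha=k-1$. Fix an assignment $I$ of $x_1,\dots,x_k$. After the substitution $b_{k-1}=1$, any extension $I'$ of $I$ to the fresh variables is precisely an extension of the assignment ``$I$ together with $I(b_{k-1})=1$'' considered in the second statement of Lemma~\ref{lem:T0}. That statement tells us that the maximum number of $2$XOR constraints of $P$ satisfiable over such extensions is $2(k-1)$ when some $x_i=1$ and $2(k-2)$ when $x_1=\cdots=x_k=0$, and it exhibits an explicit extension attaining this maximum. Because all constraints carry weight $1/2$, multiplying these counts by $1/2$ shows that the best weighted value of $P$ over extensions of $I$ is $k-1$ when $I(x_1\vee\cdots\vee x_k)=1$ and $k-2$ when $I(x_1\vee\cdots\vee x_k)=0$. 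Since ``maximizes'' in the lemma means that no extension does strictly better, this yields at once the upper bounds $I'(P)\le\alpha$ (respectively $I'(P)\le\alpha-1$) together with a witnessing extension achieving equality, which are exactly the two defining conditions of an $(\alpha,\beta)$-gadget with $\alpha=k-1$.

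I expect the only delicate points to be bookkeeping rather than mathematics: one must check that the constant $\hat 1$ genuinely corresponds to the hypothesis $I(b_{k-1})=1$ of Lemma~\ref{lem:T0}, that the remaining $b$-variables are exactly the fresh variables of the gadget, and that the conversion between ``number of satisfied constraints'' (used in the lemma) and ``weighted value'' (used in the gadget definition) is just the uniform factor $1/2$. Granting Lemma~\ref{lem:T0}, no further induction or case analysis is needed, so the main obstacle is simply making this correspondence precise.
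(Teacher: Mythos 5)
Your proposal is correct and follows essentially the same route as the paper's own proof: both read off $\alpha=k-1$ from the second statement of Lemma~\ref{lem:T0} (identifying the substitution of the constant $\hat 1$ with the hypothesis $I(b_{k-1})=1$, so that satisfied clauses extend to $2(k-1)$ satisfied constraints and falsified ones to at most $2(k-2)$), and both obtain $\beta=\frac{3}{2}(k-1)$ by counting the $3(k-1)$ constraints of weight $1/2$. The only point the paper covers that you omit is that MaxEkSAT clauses may contain negated literals, while $T^0$ is defined only for positive clauses; the paper dispatches this in one line by noting $\neg x\oplus b = k$ is equivalent to $x\oplus b = 1-k$, and you should add a similar remark for full generality.
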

\begin{proof}
The soundness of the reduction is based on the second statement of Lemma~\ref{lem:T0}. Assume that we force the variable $\hat{1}$ to be interpreted as one, i.e. $I(\hat{1})=1$, or that, alternatively, we simply replace it by the constant $1$. Any assignment satisfying the clause can extend to satisfy at least $2(k-1)$ 2XOR constraints from $T^0(x_1\vee\cdots\vee x_k, \hat{1})$, with a total weight equal to $k-1$. Any assignment  falsifying it can only be extended to satisfy at most $2(k-2)$ 2XOR constraints, with a total weight $k-2$. Hence, $\alpha = k-1$. The number of 2XOR constraints is $3(k-1)$, all of them with weight $1/2$. Therefore, $\beta=3/2(k-1)$.

We only describe the translation of clause $x_1\vee\cdots\vee x_k$ when all variables are positive. We can easily generalize the transformation, when any of the variables $x$ are negated, simply by recalling that $\neg x\oplus b = k$ is equivalent to $x\oplus b = 1-k$, and $\neg x\oplus \neg x' = k$ is equivalent to $x\oplus x'=k$.
\end{proof}

\begin{remark}
Consider the Max2XOR problem $P=T^0(x_1\vee\cdots\vee x_k,b_{k-1})$.

For any assignment $I:\{x_1,\dots,x_k\}\to\{0,1\}$, its extension, defined by $I(b_i)=I(x_1)\vee\cdots\vee I(x_{i+1})$, for $i=1,\dots k-1$ (similar but not equal to the one proposed in the proof of the first statement of Lemma~\ref{lem:T0}, also) maximizes the number of satisfied constraints of $P$, satisfying $I(P)=k-1$.

For any assignment $I:\{x_1,\dots,x_k,b_{k-1}\}\to\{0,1\}$, where $I(b_{k-1})=1$,
its extension $I(b_i)=I(x_1)\vee\cdots\vee I(x_{i+1})$, for $i=1,\dots k-2$ (similar but not equal to the one proposed in the proof of the second statement of Lemma~\ref{lem:T0}, also) maximizes the number of satisfied constraints of $P$, satisfying $I(P)=k-1$, when $I(x_1\vee\cdots x_k)=1$, and $I(P)=k-2$,  when $I(x_1\vee\cdots x_k)=0$.
\end{remark}

\ignorar{
\section{Directly from MaxSAT to MaxCUT}

Finally, we can concatenate the $(k-1,3/2(k-1))$-gadget from MaxEkSAT to Max2XOR described in Theorem~\ref{thm-MaxSAT->PC} and the average $(3/2,3/2)$-gadget from from Max2XOR to MaxCUT described in Lemma~\ref{lem-2PC->MaxCUT} and Corollary~\ref{cor-2PC->MaxCUT}. By Lemma~\ref{lem-concatenation}, we know that we get a $(7/4(k-1),\ 9/4(k-1))$-gadget from MaxEkSAT to MaxCUT.
A more detailed analysis allows us to ensure a better performance just specifying a set of variables $V'$ to be negated in the Max2XOR problem.

\begin{theorem}
For every $k\geq 2$, there exist a $(3/2(k-1)+1/4,\ 2(k-1)+1/4)$-gadget from MaxEkSAT to MaxCUT.
\end{theorem}
\begin{proof}
We can concatenate the $(k-1,3/2(k-1))$-gadget from MaxEkSAT to Max2XOR described in Theorem~\ref{thm-MaxSAT->PC} and the gadget from from Max2XOR to MaxCUT described in Lemma~\ref{lem-2PC->MaxCUT}. However, after applying the gadget from MaxEkSAT to Max2XOR, we prove that there exists a set of variables that, when negated as described in Corollary~\ref{cor-2PC->MaxCUT}, results into a lower-bounded number of blue edges (i.e. PC$_1$ constraints).

In order to maximize the number of blue constraints between $b$'s and between $b$'s and $\hat{0}$, since all them are red in Theorem~\ref{thm-MaxSAT->PC}, we negate $b_i$, for $i=k-2,k-4,k-6,\dots$. I.e. when $k$ is even, we negate all $b_i$'s with even $i$, and when $k$ is odd, we negate all $b_i$'s with odd $i$. Assume that $k$ is even (similarly if $k$ is odd). After negating $b_2,\dots, b_{k-2}$, we get:
\[
\begin{array}{l@{\ }l@{\ }l}
x_1\oplus b_1=1, & x_1\oplus x_2=1, & b_1\oplus x_2=1\\
b_1\oplus b'_2=1, & b_1\oplus x_3=0, & b'_2\oplus x_3=0\\
b'_2\oplus b_3=1, & b'_2\oplus x_4=1, & b_3\oplus x_3=1\\
& \cdots & \\
b'_{k-4}\oplus b_{k-3}=1, & b'_{k-4}\oplus x_{k-2}=1, & b_{k-3}\oplus x_{k-2}=1\\
b_{k-3}\oplus b'_{k-2}=1, & b_{k-3}\oplus x_{k-1}=0, & b'_{k-2}\oplus x_{k-1}=0\\
b'_{k-2}\oplus \hat{0}=1, & b'_{k-2}\oplus x_k=1, & \hat{0}\oplus x_k=1
\end{array}
\]
Notice that there are $k-2$ blue edges between $b_i$'s or between $b_{k-2}$ and $\hat{0}$. The rest of edges can be red or blue and all them connect some $x_i$ with some $x_j$, $b_j$ or $\hat{0}$. We will show to select a subset of $x_i$'s variables than, when negated, make half of these second set of $2k-1$ edges also blue. From $i=1$ until $i=n$, consider the set of edges between $x_i$ and some $b$'s, $\hat{0}$ or some $x_j$ with $j<i$. If there are more red that blue edges, negate $x_i$. This ensures that at the end of the process the number of blue edges is at least $k-2 + \frac{2k-1}{2}= 2k-5/2$ blue edges, and at most $\frac{2k-1}{2}= k -1/2$ red edges, all them with weight $1/2$.

We have a $(1,1)$-gadget (for blue edges) from Max2XOR$_1$ to MaxCUT, and a $(2,2)$-gadget (for red edges) from Max2XOR$_0$ to MaxCUT. Similarly to the proof of Lemma~\ref{lem-concatenation}, we can prove that the concatenation of these two translations with the gadget from MaxEkSAT to Max2XOR results into:
\[
\begin{array}{lll}
\alpha &= \frac{2k-5/2}{2}(1-1) + \frac{k-1/2}{2}(2-1) + k-1 & = 3/2(k-1) + 1/4\\[2mm]
\beta &=  \frac{2k-5/2}{2} 1 + \frac{k-1/2}{2}2 &=2(k-1)+1/4
\end{array}
\]
\hfill\qed
\end{proof}

Notice that in the proof of the previous theorem we basically prove that, for big $k$, at least $2/3$ of 2PC constraints are blue. This is equivalent to prove that we have an average $(4/3,4/3)$-gadget from our special subset of Max2XOR problems into MaxCUT.
}

\subsection{A Parallel Translation}

The previous reduction of MaxSAT into Max2XOR can be seen as a kind of \emph{sequential checker}. We can use a sort of \emph{parallel checker}. Before describing the reduction formally, we introduce an example.

\begin{example}\label{ex:parallelTranslation}
The clause $x_1\vee\cdots\vee x_7$ can be reduced to the following Max2XOR problem, where all XOR constraints have weight $1/2$:
\begin{center}
\includegraphics[page=3]{main-pics.pdf}
\ignorar{
\psset{xunit=15mm,yunit=15mm}
\begin{pspicture}(0,-0.3)(6,3.3)
\rput(0,3){\ovalnode{x1}{$x_1$}}
\rput(1,3){\ovalnode{x2}{$x_2$}}
\rput(2,3){\ovalnode{x3}{$x_3$}}
\rput(3,3){\ovalnode{x4}{$x_4$}}
\rput(4,3){\ovalnode{x5}{$x_5$}}
\rput(5,3){\ovalnode{x6}{$x_6$}}
\rput(6.5,2){\ovalnode{x7}{$x_7$}}
\rput(0.5,2){\ovalnode{b1}{$b_1$}}
\rput(2.5,2){\ovalnode{b2}{$b_2$}}
\rput(4.5,2){\ovalnode{b3}{$b_3$}}
\rput(1.5,1){\ovalnode{b4}{$b_4$}}
\rput(5.5,1){\ovalnode{b5}{$b_5$}}
\rput(3.5,0){\ovalnode{1}{$\hat{1}$}}
    \psset{linecolor=blue}
    \ncline{-}{x1}{x2}
    \ncline{-}{x3}{x4}
    \ncline{-}{x5}{x6}
    \ncline{-}{b1}{b2}
    \ncline{-}{b3}{x7}
    \ncline{-}{b4}{b5}
    \psset{linecolor=red, linestyle=dashed}
    \ncline{-}{x1}{b1}
    \ncline{-}{x2}{b1}
    \ncline{-}{x3}{b2}
    \ncline{-}{x4}{b2}
    \ncline{-}{x5}{b3}
    \ncline{-}{x6}{b3}
    \ncline{-}{b1}{b4}
    \ncline{-}{b2}{b4}
    \ncline{-}{b3}{b5}
    \ncline{-}{x7}{b5}
    \ncline{-}{b4}{1}
    \ncline{-}{b5}{1}

\end{pspicture}
}
\end{center}

\ignorar{
By negating some of the $b$'s and $x$'s we can get an equivalen 2PC problem with only blue edges:

\begin{center}
\psset{xunit=15mm,yunit=15mm}
\begin{pspicture}(0,-0.3)(6,3.3)
\rput(0,3){\ovalnode{x1}{$\neg x_1$}}
\rput(1,3){\ovalnode{x2}{$\neg x_2$}}
\rput(2,3){\ovalnode{x3}{$\neg x_3$}}
\rput(3,3){\ovalnode{x4}{$\neg x_4$}}
\rput(4,3){\ovalnode{x5}{$\neg x_5$}}
\rput(5,3){\ovalnode{x6}{$\neg x_6$}}
\rput(6.5,2){\ovalnode{x7}{$x_7$}}
\rput(0.5,2){\ovalnode{b1}{$b_1$}}
\rput(2.5,2){\ovalnode{b2}{$b_2$}}
\rput(4.5,2){\ovalnode{b3}{$b_3$}}
\rput(1.5,1){\ovalnode{b4}{$\neg b_4$}}
\rput(5.5,1){\ovalnode{b5}{$\neg b_5$}}
\rput(3.5,0){\ovalnode{1}{$\hat{1}$}}
    \psset{linecolor=blue}
    \ncline{-}{x1}{x2}
    \ncline{-}{x3}{x4}
    \ncline{-}{x5}{x6}
    \ncline{-}{b1}{b2}
    \ncline{-}{b1}{b4}
    \ncline{-}{b2}{b4}
    \ncline{-}{x1}{b1}
    \ncline{-}{x2}{b1}
    \ncline{-}{x3}{b2}
    \ncline{-}{x4}{b2}
    \ncline{-}{x5}{b3}
    \ncline{-}{x6}{b3}
    \ncline{-}{x7}{b5}
    \ncline{-}{b5}{1}
    \ncline{-}{b3}{x7}
    \ncline{-}{b4}{b5}
    \ncline{-}{b3}{b5}
    \ncline{-}{b4}{1}
\end{pspicture}
\end{center}
}
\end{example}

\begin{definition}
Given a clause $x_1\vee\cdots x_k$ and a variable $b$, their translation into Max2XOR constraints is defined recursively and non-deterministically as:
$$
T(x_1\vee\cdots\vee x_k,\ b) =
\left\{
\begin{tabular}{l@{\ }p{0.25\textwidth}}
$\{x_1\oplus x_2=1, x_1\oplus b=0, x_2\oplus b=0\}$
& if $k=2$\\[3mm]
\arraycolsep 0mm
$\begin{array}[t]{l}
\{b'\oplus x_k=1, b'\oplus b=0, x_k\oplus b=0\}\ \cup\\
T(x_1\vee\cdots\vee x_{k-1},\ b')
\end{array}$
&if $k\geq 3$, where $b'$ is a fresh variable\\[7mm]
\arraycolsep 0mm
$\begin{array}[t]{l}
\{b'\oplus x_1=1, b'\oplus b=0, x_1\oplus b=0\}\ \cup\\
T(x_2\vee\cdots\vee x_{k},\ b')
\end{array}$
&if $k\geq 3$, where $b'$ is a fresh variable\\[7mm]
\arraycolsep 0mm
$\begin{array}[t]{l}
\{b'\oplus b''=1, b'\oplus b=0, b''\oplus b=0\}\ \cup\\
\ T(x_1\vee\cdots\vee x_r,\ b')\ \cup\\
\ T(x_{r+1}\vee\cdots\vee x_k,\ b'')
\end{array}$
&if $k\geq 4$, where $b'$ and $b''$ are fresh variables, and $2\leq r \leq k-2$
\end{tabular}
\right.
$$
where all XOR constraints have weight $1/2$.
\end{definition}

Notice that the definition of $T$ is not deterministic. When $k=3$ we have two possible translations (applying the second or third case), and when $k\geq 4$ we can apply the second, third and forth cases, with distinct values of $r$, getting as many possible translations as possible binary trees with $k$ leaves.

Notice also that the definition of $T^0$ corresponds to the definition of $T$ where we only apply the first and second cases. Therefore, $T$ is a generalization of $T^0$.

\begin{lemma}\label{lem:parallel}
Consider the set of 2XOR constraints of $T(x_1\vee\cdots\vee x_k,b)$ without weights. We have:
\begin{enumerate}
    \item There are $3(k-1)$ constraints.
    \item Any assignment satisfies at most $2(k-1)$ of them.
    \item Any assignment $I:\{x_1,\dots,x_k\}\to\{0,1\}$ can be extended to an optimal assignment satisfying $I(b)=I(x_1\vee\cdots\vee x_k)$ and $2(k-1)$ constraints.
    \item Any assignment $I$ satisfying $I(x_1\vee\cdots\vee x_k)=0$ and $I(b)=1$ can be extended to an optimal assignment satisfying $2(k-2)$ constraints.
\end{enumerate}
\end{lemma}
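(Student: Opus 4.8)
The plan is to prove all four statements simultaneously by structural induction on the recursive definition of $T$, matching the non-deterministic branching of the definition itself. Since $T$ is defined by four cases (the base case $k=2$, and the three recursive cases for peeling off $x_k$, peeling off $x_1$, or splitting into two subclauses), the induction must handle each recursive case. The first statement, that there are $3(k-1)$ constraints, is the easiest: the base case contributes $3$ constraints, and each recursive step adds exactly $3$ new constraints while reducing the total number of variables-to-be-covered in a way that telescopes; formally, the split case combines $T$ on $r$ leaves and $T$ on $k-r$ leaves, giving $3(r-1)+3(k-r-1)+3=3(k-1)$, and the peeling cases give $3(k-2)+3=3(k-1)$.

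For the second statement (at most $2(k-1)$ satisfiable constraints), the key observation is that the three constraints introduced at each recursive step, namely $\{u\oplus v=1,\ u\oplus b=0,\ v\oplus b=0\}$ for appropriate $u,v$, form an inconsistent triangle: their left-hand sides XOR to $1$ but as an equation system they force $0=1$, so no assignment can satisfy all three, and at most $2$ can hold. I would argue that the $3(k-1)$ constraints partition into $k-1$ such triangles (one per internal node of the binary tree defining the translation), each contributing at most $2$ satisfied constraints, giving the bound $2(k-1)$. This triangle-decomposition is the structural heart of the argument and makes both the upper bound and the constructions transparent.

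The third and fourth statements are the substantive ones and I would prove them together by induction, since statement 4 for a subclause feeds into statement 3 for the parent, and vice versa. For statement 3, given $I$ on the $x_i$'s, I set $I(b)=I(x_1\vee\cdots\vee x_k)$ and, in each recursive case, recurse on the subclause(s) using statement 3, then check that the newly introduced triangle achieves its maximum of $2$ satisfied constraints given the values propagated to $b'$, $b''$, and $b$. The crucial compatibility check is that the disjunction values compose correctly: in the split case, $I(x_1\vee\cdots\vee x_k)=I(x_1\vee\cdots\vee x_r)\vee I(x_{r+1}\vee\cdots\vee x_k)$, so setting $I(b')$ and $I(b'')$ to the two sub-disjunctions (as statement 3 provides on the subproblems) must be consistent with the top triangle realizing its value $I(b)=I(b')\vee I(b'')$. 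Statement 4 is the genuinely delicate case: here $I(b)=1$ is \emph{forced} while the true disjunction value is $0$, so there is a ``mismatch'' at the root. I would show that this mismatch costs exactly one satisfied constraint relative to the optimum $2(k-1)$, yielding $2(k-2)$. The argument is that with all $x_i=0$ and $b=1$, the root triangle can satisfy only $2$ of its constraints but the propagated sub-values can no longer simultaneously match what statement 3 would give, so one triangle somewhere drops from $2$ to $1$; I would pin down which triangle by tracing where the forced value $b=1$ conflicts with the honest disjunction value $0$.

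The main obstacle I anticipate is the bookkeeping in statement 4 for the split case, where the forced value $b=1$ must be distributed to exactly one of the two subproblems as a forced value while the other retains its honest (here, $0$) disjunction value. I expect the clean invariant to be that statement 4 on the whole clause reduces to statement 4 on exactly one subclause (the one inheriting $b'=1$ or $b''=1$) plus statement 3 on the other (inheriting the honest value $0$), with the root triangle contributing its full $2$; then the inductive loss of $1$ comes entirely from the single subclause invoked via statement 4, and the arithmetic $2+2(r-2)+2((k-r)-1)=2(k-2)$ (or the symmetric allocation) must be verified to land exactly on $2(k-2)$. Getting this allocation right in all three recursive cases, and confirming that no \emph{additional} constraint beyond the inductively accounted one is lost, is where the care is required.
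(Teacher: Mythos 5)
Your overall strategy is exactly the paper's: structural induction following the recursive definition of $T$, the decomposition of the $3(k-1)$ constraints into $k-1$ inconsistent triangles for statements 1 and 2, and a mutual induction in which statement 3 (propagating the honest disjunction value $I(b)=I(b')\vee I(b'')$) and statement 4 feed each other. Your split-case allocation for statement 4, with arithmetic $2+2(r-2)+2((k-r)-1)=2(k-2)$, is precisely one of the cases the paper checks (namely $I(b')=1$, $I(b'')=0$). So the approach is sound; however, two points in your statement-4 plan need repair.

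First, your accounting narrative is internally inconsistent and, as literally stated, impossible. You say the mismatch ``costs exactly one satisfied constraint'' relative to $2(k-1)$ and that ``one triangle somewhere drops from $2$ to $1$,'' yet $2(k-1)-2(k-2)=2$, not $1$. In fact a triangle $\{u\oplus v=1,\ u\oplus w=0,\ v\oplus w=0\}$ can never satisfy exactly one (nor all three) of its constraints: the three left-hand sides XOR to $0$ while the right-hand sides XOR to $1$, so the number of falsified constraints in a triangle is always odd, and each triangle satisfies either $2$ or $0$ of its constraints. The correct picture is that exactly one triangle drops from $2$ to $0$ (in your mixed allocation, the base triangle where the forced $1$ finally meets two zero literals; in the alternative allocation $I(b')=I(b'')=0$, the root triangle itself). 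Second, and more substantively, statement 4 asserts that the constructed extension is \emph{optimal}, so you must also prove the upper bound: no extension with all $x_i=0$ and $I(b)=1$ satisfies more than $2(k-2)$ constraints. Your invariant (``the forced value is distributed to exactly one subclause'') describes a good extension but does not by itself rule out the others; your proposed ``tracing'' argument presupposes the shape of the optimum rather than proving it. The paper closes this by exhaustively enumerating the three valuations of $(b',b'')$: both $1$ gives at most $2+2(r-2)+2((k-r)-2)=2(k-3)$; both $0$ gives at most $0+2(r-1)+2((k-r)-1)=2(k-2)$; mixed gives at most $2(k-2)$; the maximum is $2(k-2)$. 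Adding this three-case analysis (or, equivalently, the parity observation above, which forces some triangle to contribute $0$ whenever the clause is falsified and $b=1$) completes your induction.
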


\begin{proof}
The first statement is trivial.

For the second, notice that for, each one of the $k-1$ \emph{triangles} of the form $\{a\oplus b=1, a\oplus c=0, b\oplus c=0\}$,  any assignment can satisfy at most two of the constraints of each triangle.

The third statement is proved by induction. The base case, for binary clauses is trivial. For the induction case, if we apply the second or third options of the definition of $T$, the proof is similar to Lemma~\ref{lem:T0}. In the forth case, assume by induction that there is an assignment extending $I:\{x_1,\dots,x_r\}\to\{0,1\}$ that verifies $I(b')=I(x_1\vee\cdots\vee x_r)$ and satisfies $2(r-2)$ constraints of $T(x_1\vee\cdots\vee x_r,b')$. Similarly, there is an assignment extending $I:\{x_{r+1},\dots,x_k\}\to\{0,1\}$ that verifies $I(b'')=I(x_{r+1}\vee\cdots\vee x_k)$ and satisfies $2((k-r)-2)$ constraints of $T(x_{r+1}\vee\cdots\vee x_k,b'')$. Both assignments do not share variables, therefore, they can be combined into a single assignment. This assignment can be extended with $I(b)=b'\vee b''$. This ensures that it will satisfy two of the constraints from $\{b'\oplus b''=1, b'\oplus b=0, b''\oplus b=0\}$. Therefore, it verifies $I(b)=I(x_1\vee\cdots\vee x_k)$ and satisfies $2(r-2)+2((k-r)-2)+2=2(k-1)$ constraints of $T(x_1\vee\cdots\vee x_k,b)$. Since this is the maximal number of constraints we can satisfy, this assignment is optimal.

The forth statement is also proved by induction. The base case, for binary clauses is trivial. For the induction case, consider only the application of the forth case of the definition of $T$ (the other cases are quite similar). We have to consider 3 possibilities:

If we extend $I(b')=I(b'')=1$, by induction, we can extend $I$ to satisfy $2(r-2)$ constraints of $T(x_1\vee\cdots\vee x_r,b')$ and $2((k-r)-2)$ constraints of $T(x_{r+1}\vee\cdots\vee x_k,b'')$. It satisfies $2$ constraints from $\{b'\oplus b''=1, b'\oplus b=0, b''\oplus b=0\}$, hence a total of $2(k-3)$ constraints.

If we extend $I(b')=I(b'')=0$, by the third statement of this lemma, we can extend $I$ to satisfy $2(r-1)$ constraints of $T(x_1\vee\cdots\vee x_r,b')$ and $2((k-r)-1)$ constraints of $T(x_{r+1}\vee\cdots\vee x_k,b'')$. It does not satisfy any constraints from $\{b'\oplus b''=1, b'\oplus b=0, b''\oplus b=0\}$, hence a total of $2(k-2)$ constraints.

Finally, If we extend $I(b')=1$ and $I(b'')=0$ (or vice versa), by induction, we can extend $I$ to satisfy $2(r-2)$ constraints of $T(x_1\vee\cdots\vee x_r,b')$ and by the third statement $2((k-r)-2)$ constraints of $T(x_{r+1}\vee\cdots\vee x_k,b'')$. It also satisfy $2$ constraints from $\{b'\oplus b''=1, b'\oplus b=0, b''\oplus b=0\}$, hence a total of $2(k-2)$ constraints.
\end{proof}

\begin{theorem}\label{thm:parallel}
The translation of every clause $x_1\vee\cdots\vee x_k$
into $T(x_1\vee\cdots\vee x_k,\hat{1})$ 
define a $(k-1, 3/2(k-1))$-gadget from MaxEkSAT into Max2XOR.
\end{theorem}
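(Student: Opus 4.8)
The plan is to reduce Theorem~\ref{thm:parallel} directly to Lemma~\ref{lem:parallel}, exactly in the way the proof of Theorem~\ref{thm-MaxSAT->PC} reduced to Lemma~\ref{lem:T0}. First I would instantiate $b=\hat 1$ and force $I(\hat 1)=1$ (equivalently, replace $\hat 1$ by the constant $1$ throughout). The two parameters $\alpha$ and $\beta$ are then read off from Lemma~\ref{lem:parallel}: by statement~(1) there are $3(k-1)$ constraints, each of weight $1/2$, so immediately $\beta=\tfrac{3}{2}(k-1)$, independent of which nondeterministic choices were made in building $T$.

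Next I would establish $\alpha=k-1$ by checking the two gadget conditions of the definition against the appropriate parts of Lemma~\ref{lem:parallel}, with the interpretation $I(\hat 1)=1$. For an assignment $I$ of $x_1,\dots,x_k$ with $I(x_1\vee\cdots\vee x_k)=1$: statement~(3) gives an extension that satisfies $2(k-1)$ constraints and moreover assigns $I(b)=I(x_1\vee\cdots\vee x_k)=1$, which is consistent with the forcing $I(\hat 1)=1$; since each constraint has weight $1/2$, this extension has value $\tfrac{1}{2}\cdot 2(k-1)=k-1=\alpha$, and statement~(2) guarantees no extension exceeds $2(k-1)$ satisfied constraints, i.e.\ value $\leq\alpha$. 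For an assignment with $I(x_1\vee\cdots\vee x_k)=0$: the constraint $I(\hat 1)=1$ places us exactly in the hypothesis of statement~(4), so the optimal extension satisfies $2(k-2)$ constraints, giving value $k-2=\alpha-1$; and statement~(2) again caps any extension at value $\leq k-1$, but I must argue the sharper bound $\leq\alpha-1$ in the falsifying case, which is precisely what statement~(4)'s optimality assertion provides. This matches both clauses of the gadget definition with $\alpha=k-1$.

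The remaining point is that $T$ is nondeterministic, so I would note that Lemma~\ref{lem:parallel} is stated for \emph{any} of the trees produced by $T$; hence all its counts ($3(k-1)$ constraints, the $2(k-1)$ and $2(k-2)$ bounds) hold uniformly regardless of the recursion choices, and therefore the derived $\alpha$ and $\beta$ do not depend on the particular translation chosen. Finally, as in Theorem~\ref{thm-MaxSAT->PC}, I would remark that I have only treated the all-positive clause, and that negated literals are handled by the substitutions $\neg x\oplus b=k \equiv x\oplus b=1\xor k$ and $\neg x\oplus \neg x'=k\equiv x\oplus x'=k$, which leave the weights and hence $\alpha,\beta$ untouched.

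The main obstacle, such as it is, lies in the falsifying case: I must be careful that statement~(4) of Lemma~\ref{lem:parallel} is invoked with its hypothesis $I(b)=1$ genuinely enforced by the replacement $b=\hat 1$ and $I(\hat 1)=1$, and that its ``can be extended to an optimal assignment satisfying $2(k-2)$ constraints'' is read as giving both the attainability of value $\alpha-1$ and, together with statement~(2) specialized to the falsifying case, the upper bound that no extension does better. Everything else is a direct transcription of the argument already used for $T^0$ in Theorem~\ref{thm-MaxSAT->PC}.
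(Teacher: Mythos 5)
Your proposal is correct and follows essentially the same route as the paper, whose proof of Theorem~\ref{thm:parallel} simply invokes the last statements of Lemma~\ref{lem:parallel}; you have merely spelled out the details (reading off $\beta$ from statement~(1), $\alpha$ from statements~(2)--(4), forcing $I(\hat 1)=1$, and handling negated literals) exactly as in the proof of Theorem~\ref{thm-MaxSAT->PC}, which is what the paper leaves implicit.
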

\begin{proof}
The theorem is a direct consequence of the two last statements of Lemma~\ref{lem:parallel}.
\end{proof}

If we compare Theorems~\ref{thm-MaxSAT->PC} and~\ref{thm:parallel}, we observe that both establish the same values for $\alpha=k-1$ and $\beta=3/2(k-1)$ in the $(\alpha,\beta)$-gadget, and both introduce the same number $(k-2)$ of fresh variables. Therefore, a priory, the parallel translation is a generalization of the sequential translation, but it does not have a clear (theoretical) advantage. However, if we think on industrial instances with high modularity, it makes sense to generate a tree (like the one in Example~\ref{ex:parallelTranslation}), where distant variables in the original SAT problem, are located distant in the tree. For instance, using the algorithms for analyzing the modularity of industrial SAT instances (see \cite{SAT12,JAIR19}) we can determine how distant are two variables in the original SAT problem, or even better, we can obtain a dendrogram of these variables. Then, we can use this distance or the dendrogram to construct a better translation, i.e. a tree where distant variables are kept in distinct branches. This will have the effect of increasing the modularity of the problem, hence of decreasing its difficulty. We think that, using these trees for industrial SAT instances, we will obtain better experimental results than with the sequential translation.

\section{On Proof Systems for Max2XOR}\label{sec:OnProofSystems}

We can define a proof system for MaxSAT based on the MaxSAT resolution rule~\cite{LarrosaH05,SAT06,AIJ1}:
\[
\begin{array}{c}
x\vee A\\
\neg x\vee B\\
\hline
A\vee B\\
x\vee A\vee \neg B\\
\neg x\vee B\vee \neg A
\end{array}
\]
where premises are replaced by the conclusions. This rule generalizes the traditional SAT resolution rule $x\vee A,\neg x\vee B\vdash A\vee B$, but not only preserving satisfiability, but also the number of unsatisfied clauses, for any interpretation of the variables. In other words, we say that an inference rule $P\vdash C$ is \emph{sound} if $\overline I(P)=\overline I(C)$, for any assignment $I$ of the variables.
This implies that, if we can derive $P\vdash \{\wcla{m}{\eclause}\}\cup P'$, where $P'$ is satisfiable, then $m$ is equal to the minimum sum of weights of unsatisfied clauses, i.e. $\cost(P) = m$. Moreover, this rule is \emph{complete}, in the sense that when $m$ is the minimum number of unsatisfiable clauses $m=\cost(P)$, we can construct a MaxSAT refutation proof of the form $P\vdash \{\wcla{m}{\eclause}\}\cup P'$, where $P'$ is satisfiable.

Gadget transformations can also be written in the form of inference rules. We know that a $(\alpha,\beta)$-gadget from $P$ to $P'$ satisfies $\cost(P')=(\beta-\alpha)\weight(P)+\cost(P)$. Therefore, unless $\alpha=\beta$, the weight of unsatisfiable constraints is shifted. If we want to write an inference rule that preserves the sum of weights of unsatisfied constraints, we have to add an empty clause with weight $(\beta-\alpha)\weight(P)$ to the premises, or an empty clause with the weight negated to the conclusions (on the well-understanding that this empty clause will be canceled by other proved empty clauses). For instance, the Max2SAT to Max2XOR $(1,3/2)$-gadget that transforms $a\vee b$ to $\{\wcla{1/2}{a=1},\ \wcla{1/2}{b=1},\ \wcla{1/2}{a\xor b=1}\}$, and the $(1,7/4)$-gadget from Max3SAT to Max3XOR, can be written as:
\begin{equation}
\begin{array}{c}
     a\vee b\\
     \hline
     \wcla{-1/2}{\eclause}\\
     \wcla{1/2}{a=1}\ \
     \wcla{1/2}{b=1}\\
     \wcla{1/2}{a\xor b=1}
\end{array}
\hspace{10mm}
\begin{array}{c}
     a\vee b\vee c\\
     \hline
     \wcla{-3/4}{\eclause}\\
     \wcla{1/4}{a=1}\ \
     \wcla{1/4}{b=1}\ \
     \wcla{1/4}{c=1}\\
     \wcla{1/4}{a\xor b=1}\ \
     \wcla{1/4}{b\xor c=1}\ \
     \wcla{1/4}{a\xor c=1}\\
     \wcla{1/4}{a\xor b\xor c=1}
\end{array}
\label{eq:gadgets}
\end{equation}

Something similar can be done with the MaxSAT to Max2XOR gadget described in Section~\ref{sec:MaxSAT->Max2XOR}.

\subsection{A Polynomial and Incomplete Proof System for Max2XOR}

With respect to a proof system for Max2XOR, our first step is to use a mixture of XOR and SAT constraints. Since we are able to reduce SAT to XOR constraints of size 2, we will only consider XOR constraints of this size. The proof system is described by the rules of Figure~\ref{fig-rules} to which we refer as Max2XOR proof system. Notice that, in all rules, the SAT clauses prevent the assignments that falsify the two premises. Notice also that, in this first version, none of the rules transforms the SAT constraints, and the SAT constraints are never resolved. Therefore, in the best case, the application of the rules to a XOR problem finishes with a set of SAT constraints.

\begin{figure}[ht]
\[
\begin{array}{c}
x\xor a = 0\\
x\xor b = 0\\
\hline
a\xor b = 0\\
\wcla{2}{x\vee \neg a\vee \neg b}\\
\wcla{2}{\neg x\vee a\vee b}\\
\end{array}
\hspace{10mm}
\begin{array}{c}
x\xor a = 0\\
x\xor b = 1\\
\hline
a\xor b = 1\\
\wcla{2}{x\vee \neg a\vee b}\\
\wcla{2}{\neg x\vee a\vee \neg b}
\end{array}
\hspace{10mm}
\begin{array}{c}
x\xor a = 1\\
x\xor b = 1\\
\hline
a\xor b = 0\\
\wcla{2}{x\vee a\vee b}\\
\wcla{2}{\neg x\vee \neg a\vee \neg b}
\end{array}
\hspace{10mm}
\begin{array}{c}
C = 0\\
C = 1\\
\hline
\eclause
\end{array}
\]

\[
\begin{array}{c}
x = 0\\
x\xor a = 0\\
\hline
a = 0\\
\wcla{2}{\neg x\vee a}
\end{array}
\hspace{10mm}
\begin{array}{c}
x = 0\\
x\xor a = 1\\
\hline
a = 1\\
\wcla{2}{\neg x\vee \neg a}
\end{array}
\hspace{10mm}
\begin{array}{c}
x = 1\\
x\xor a = 0\\
\hline
a = 1\\
\wcla{2}{x\vee \neg a}
\end{array}
\hspace{10mm}
\begin{array}{c}
x = 1\\
x\xor a = 1\\
\hline
a = 0\\
\wcla{2}{x\vee a}
\end{array}
\]
\caption{Max2XOR resolution rules. The SAT clauses in some of the rules ensure that, when the assignment falsifies the two premises, the number of falsified clauses in the conclusion is the same.}\label{fig-rules}
\end{figure}

\begin{lemma}\label{lem:soundness}
The rules in the Max2XOR proof system are sound.
\end{lemma}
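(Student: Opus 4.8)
The plan is to prove soundness rule by rule, where, by the definition recalled just above, a rule $P\vdash C$ is sound iff $\overline I(P)=\overline I(C)$ for every assignment $I$ of the variables occurring in the rule. Since each rule involves at most three variables, a $2^3$- or $2^2$-row truth table settles each case directly; I would rather isolate the one structural phenomenon that drives all the rules and then merely check the polarities of the attached SAT clauses.

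First I would dispatch the contradiction rule $C=0,\ C=1\vdash\eclause$: for every $I$ exactly one of the two incompatible premises is falsified, so $\overline I(P)=1$, and since $\eclause$ is falsified under every assignment, $\overline I(C)=1$ as well.

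The heart is the three ternary resolution rules (and, with the obvious modifications, the four unit rules). Here I would use the algebraic identity $(x\xor a)\xor(x\xor b)=a\xor b$. Writing $p_1,p_2\in\{0,1\}$ for the indicators that the XOR premises $x\xor a=k_1$ and $x\xor b=k_2$ are falsified, the XOR conclusion $a\xor b=k_1\xor k_2$ is falsified precisely when exactly one premise is falsified, that is, when $p_1\neq p_2$. Hence the contribution of the XOR parts to $\overline I(C)-\overline I(P)$ is $(p_1\oplus p_2)-(p_1+p_2)$, which is $0$ when at most one premise is falsified and $-2$ when both are. So plain XOR resolution loses exactly two units of weight, and only on the (at most two) assignments that falsify both premises.

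It then remains to verify that in each rule the attached weight-$2$ SAT clauses supply exactly this missing $2$, i.e. that they are falsified precisely on the doubly-falsifying assignments and satisfied everywhere else. For the ternary rules the condition ``both premises falsified'' fixes $a$ and $b$ as functions of $x$, yielding two assignments (one per value of $x$); I would check that the two clauses, e.g. $x\vee\neg a\vee\neg b$ and $\neg x\vee a\vee b$ in the first rule, are each falsified by exactly one of these two assignments and by no other, so together they add weight $2$ on each bad assignment and $0$ elsewhere. For the four unit rules one premise is a literal on $x$, so the doubly-falsifying condition pins down a single assignment of $(x,a)$, compensated by the single attached weight-$2$ clause (for instance $\neg x\vee a$ in the rule $x=0,\ x\xor a=0\vdash a=0$). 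Matching each clause's falsifying assignment to the corresponding bad assignment is the only place where care is needed, and it is precisely this polarity bookkeeping, rather than any conceptual difficulty, that I expect to be the main (though routine) obstacle; a direct truth table remains the fallback for any rule whose polarities are not immediately transparent.
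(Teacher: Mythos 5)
Your proof is correct, but it takes a genuinely different route from the paper's. You verify the soundness invariant $\overline I(P)=\overline I(C)$ semantically, organizing the case analysis around the identity that the XOR conclusion $a\xor b=k_1\xor k_2$ is falsified exactly when one of the two XOR premises is falsified, so that plain XOR resolution loses weight $2$ precisely on the assignments falsifying both premises; it then only remains to check that the attached weight-$2$ SAT clauses are falsified exactly on those ``doubly falsifying'' assignments and nowhere else, which indeed holds for all seven rules (e.g.\ for the first rule the bad assignments are $(x,a,b)=(0,1,1)$ and $(1,0,0)$, which are precisely the unique falsifying assignments of $x\vee\neg a\vee\neg b$ and $\neg x\vee a\vee b$ respectively). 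The paper acknowledges that such a truth-table verification would suffice, but instead argues syntactically: it translates each XOR premise into its pair of binary clauses (e.g.\ $x\xor a=0$ into $x\vee\neg a$ and $\neg x\vee a$), applies the MaxSAT resolution rule --- already known to preserve the weight of unsatisfied clauses --- twice, and folds the resulting clauses back into XOR form, thereby exhibiting each Max2XOR rule as a compiled MaxSAT-resolution derivation. Your argument buys self-containedness and an explanation of why the weight-$2$ clauses have exactly the polarities they do (they are the indicators of the bad assignments); the paper's argument buys a structural connection to the established MaxSAT resolution system, which both inherits soundness without any per-assignment analysis and reveals how the rules were designed in the first place.
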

\begin{proof}
It would suffice to build the truth table and check that indeed the weight of unsatisfied clauses in the premises is preserved in the conclusions for every interpretation. 

We alternatively prove the soundness by the application of the MaxSAT resolution rule which we know preserves the weight of the unsatisfied clauses. For example, let's show below the steps from left to right applied on the premises $x \xor a = 0, x \xor b = 0$. The first and the last steps correspond to direct translations between XOR constraints and clauses. The other two steps are applications of the MaxSAT resolution rule. If we fold the ternary clauses we get the conclusions $a \xor b = 0$, $\wcla{2}{x\vee \neg a\vee \neg b}$, $\wcla{2}{\neg x\vee a\vee b}$. The rest of the cases in figure \ref{fig-rules} can be proven in a similar way.
\[
\begin{array}{c}
\textcolor{blue}{x\xor a = 0}\\
\textcolor{blue}{x\xor b = 0}\\
\hline
x \vee \neg a\\
\neg x \vee a\\ 
x \vee \neg b\\
\neg x \vee b\\ 
\end{array}
\hspace{10mm}
\begin{array}{c}
x \vee \neg a\\
\neg x \vee b\\ 
\hline
\neg a \vee b\\
\textcolor{blue}{x \vee \neg a \vee \neg b}\\
\textcolor{blue}{\neg x \vee a \vee b}
\end{array}
\hspace{10mm}
\begin{array}{c}
x \vee \neg b\\
\neg x \vee a\\ 
\hline
\neg b \vee a\\
\textcolor{blue}{x \vee \neg a \vee \neg b}\\
\textcolor{blue}{\neg x \vee a \vee b}
\end{array}
\hspace{10mm}
\begin{array}{c}
a \vee \neg b\\
\neg a \vee b\\
\hline
\textcolor{blue}{a \xor b = 0}\\
\end{array}
\]
\end{proof}



\begin{theorem}
For any weighted Max2XOR formula $P$, if inference rules of Figure~\ref{fig-rules} allows us to derive $P\vdash \{    \wcla{m}{\eclause}\}\cup P'$, then the minimal unsatisfiable weight of $P$ is equal to the minimal unsatisfiable weight of $P'$ plus $m$, i.e. $\cost(P) = m+\cost(P')$.

Moreover, the length of any of these derivations is linearly bounded.
\end{theorem}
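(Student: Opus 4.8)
The plan is to handle the two assertions by different means: the cost identity follows from soundness alone, while the length bound needs a terminating monovariant.

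\textbf{The cost identity.} I would deduce $\cost(P)=m+\cost(P')$ directly from Lemma~\ref{lem:soundness} without any case analysis on assignments. Recall that a rule $Q\vdash R$ is sound when $\overline I(Q)=\overline I(R)$ for every assignment $I$, and that $\overline I$ is additive over a (multiset) union, so applying a sound rule to part of a problem and leaving the remainder untouched preserves $\overline I$ of the whole problem. Composing these preservations along the derivation $P\vdash\{\wcla{m}{\eclause}\}\cup P'$ gives $\overline I(P)=\overline I(\{\wcla{m}{\eclause}\}\cup P')$ for \emph{every} $I$. Since the empty clause is falsified by every assignment, $\overline I(\{\wcla{m}{\eclause}\})=m$ identically, whence $\overline I(P)=m+\overline I(P')$ pointwise. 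Taking $\min_I$ on both sides and using $\cost(Q)=\min_I\overline I(Q)$ yields $\cost(P)=m+\cost(P')$. The only thing to verify is that each rule of Figure~\ref{fig-rules} is applied in the weight-carrying manner of the preliminaries, which is exactly what Lemma~\ref{lem:soundness} guarantees.

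\textbf{The linear length.} For the second claim I would exhibit a quantity that strictly decreases at every step. Inspecting Figure~\ref{fig-rules}, every rule has exactly two XOR/unit premises and produces exactly one XOR/unit constraint (the resolvent, or the empty clause for the contradiction rule) together with SAT clauses which, as the text notes, are never premises of a later rule. Hence the total weight $w_\oplus$ of the XOR and unit constraints currently present (equivalently, their number once same-subset constraints are merged and incompatible parities cancelled) is the natural monovariant: a resolution step of weight $w$ removes XOR weight $2w$ and returns $w$, decreasing $w_\oplus$ by exactly $w$, while the contradiction rule removes $2w$ and returns none, decreasing $w_\oplus$ by $2w$. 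Thus $w_\oplus$ never increases and strictly drops by the weight of each application. To convert this into a length bound I must keep $w$ away from $0$: here the convention that at least one premise is \emph{not} decomposed forces $w$ to be the full weight of one premise, so the smaller premise is entirely consumed at each step. Writing all weights of $P$ over a common denominator $D$, every weight ever produced --- the only operations being subtraction (the leftover of a decomposed premise) and addition (merging equal constraints) --- is again a nonnegative multiple of $1/D$; so every positive weight in the derivation is at least $1/D$, and $w_\oplus$ drops by at least $1/D$ per step. Since $0\le w_\oplus\le\weight(P)$ throughout, the number of steps is at most $D\,\weight(P)$, which is linear in the number of input constraints in the intended unit-weight (bounded-denominator) regime.

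\textbf{The main obstacle.} The delicate point, and the only place the argument can break, is precisely this weighted-decomposition convention. Without the ``at least one premise undecomposed'' restriction one could split premises into ever smaller fractions and manufacture resolvents of vanishing weight, producing an unbounded (Zeno-like) derivation in which $w_\oplus$ stays positive forever. The crux is therefore to combine that restriction with the common-denominator observation to secure the uniform lower bound $1/D$ on the per-step decrease; once that uniform bound is in hand, linearity of the derivation length is immediate.
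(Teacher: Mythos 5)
Your proof is correct, and it is worth separating the two halves when comparing it to the paper. For the cost identity you do exactly what the paper does: the paper's entire proof of that half is the remark that it is a direct consequence of Lemma~\ref{lem:soundness}, and your pointwise argument (soundness plus additivity of $\overline I$ over the untouched part, compose along the derivation, use $\overline I(\{\wcla{m}{\eclause}\})=m$, then take $\min_I$) is just that consequence written out. For the length bound the paper is again one line --- ``all rule applications remove at least one XOR constraint'' --- and here your analysis is a genuine refinement that repairs a gap in the literal reading: every rule also \emph{adds} one XOR resolvent, and when the premise weights differ a residual of the heavier premise survives, so the \emph{number} of XOR constraints need not strictly decrease (two in; one residual plus one resolvent out), and counting constraints alone does not bound the derivation length. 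Your monovariant --- the total XOR weight $w_\oplus$, which drops by exactly the application weight --- combined with the observation that the ``at least one premise undecomposed'' convention together with a common denominator $D$ keeps every application weight at least $1/D$, is the right way to make the claim precise, giving at most $D\,\weight(P)$ steps, i.e.\ linearly many in the unit-weight regime the paper evidently intends. Two small points to tidy: say explicitly that $w_\oplus$ excludes the empty clause (you first say the contradiction rule ``produces one constraint,'' then that it ``returns none''), and note that for arbitrary rational weights $D\,\weight(P)$ is linear in the number of constraints only under your bounded-denominator caveat --- the paper's unqualified ``linearly bounded'' is loose in exactly the same way, so your statement is no weaker than the original.
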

\begin{proof}
The first statement is a direct consequence of Lemma~\ref{lem:soundness}.

For the second statement, notice that all rule applications remove at least one XOR constraint.
\end{proof}

Notice that we cannot prove a completeness result for this proof system. In other words, unless $P=NP$, we cannot prove that any Max2XOR formula $P$, admits a derivation $P\vdash \{    \wcla{m}{\eclause}\}\cup P'$, where $P'$ is satisfiable. Therefore, the proof system is polynomial, but incomplete. \ignorar{However, in the next section, we show how it is able to prove the unsatisfiability of the pigeon-hole principle for $3$ pigeons and $2$ holes.}


\subsection{Managing SAT constraints in Max2XOR Resolution rules}

In our second version of the Max2XOR proof system, in contrast to the first version, we do not ignore the SAT constraints produced by the Max2XOR resolution rules. 

We can simply use our gadget for SAT into Max2XOR to translate the SAT clauses produced by the Max2XOR resolution rules. We can also come up with a more compact representation with fewer fresh variables. For example, for the resolution rule on premises $x\xor a = 0, x\xor b = 0$ we could propose the following resolution rule:
\[
\begin{array}{c}
x\xor a = 0\\
x\xor b = 0\\
\hline
a\xor b = 1\\
\wcla{2}{x\xor y=0}\\
\wcla{2}{a\xor y=0}\\
\wcla{2}{b\xor y=0}
\end{array}
\mbox{\hspace{10mm}as an alternative to:\hspace{10mm} }
\begin{array}{c}
x\xor a = 0\\
x\xor b = 0\\
\hline
a\xor b = 0\\
\wcla{2}{x\vee \neg a\vee \neg b}\\
\wcla{2}{\neg x\vee a\vee b}\\
\end{array}
\]
where $y$ is a fresh variable. (Notice that $a\xor b = 0$ is changed by $a\xor b = 1$ in the conclusions). The second and third rules in Figure~\ref{fig-rules} can also be modified in a similar way.

Notice that whether we use the gadget or the more compact version provided above, we will introduce auxiliary variables. If we translate the SAT constraints with gadgets in~(\ref{eq:gadgets}), we can avoid the introduction of fresh variables. However, in this case, the conclusions are transformed in exactly the same set of XOR constraints occurring as premises. In the following, we explain further why we do need to introduce auxiliary variables while in the MaxSAT resolution rules this is not needed.


When dealing with XOR constraints, things are more complicated than with SAT clauses. The basic reason is that, given a function from a set of $n$ Boolean values to positive rational numbers $f:\{0,1\}^n\to \mathbb{Q}^+$, representing the sum of the weights of satisfied clauses for a given assignment, we may have several ways to represent it as a SAT problem $P$, such that $f(I)=I(P)$. For instance, the function $f(a,b,c)=2+a\,(1-c)+c\,b$ may be represented as $P_1=\{\top,\ a\vee c,\ b\vee\neg c\}$, or as $P_2=\{a\vee b,\ c\vee a\vee\neg b,\ \neg c\vee b\vee\neg a\}$.\footnote{Notice the similarity between these problems and the premises and conclusions of the MaxSAT resolution rule.} Basically, because there are $2^n$ distinct assignments to the variables and $3^n$ distinct possible clauses (notice that a variable can appear with positive or negative polarity, or not appear in a constraint). However, using XOR constraints, since there are at most $2^n$ possible clauses (since negative polarities are not used in XOR constraints), there is only one way to represent one of such functions. In our example, $f$ can only be represented as $P_3=\{\wcla{3/2}\top,\, \wcla{1/2}{a=1},\, \wcla{1/2}{b=1},\, \wcla{1/2}{x\xor a=1},\, \wcla{1/2}{x\xor b=0}\}$. The same applies if we want to preserve the sum of the weights of unsatisfied clauses, as in the MaxSAT resolution rule. This means that the only way of writing an inference rule for Max2XOR, in the style of MaxSAT resolution, preserving the sum of the weights of unsatisfied clauses, is by introducing fresh variables. 



Another important question is how to use the incomplete proof system on practice. Suppose that, starting with $P$ we get $\{    \wcla{m}{\eclause}\}\cup P'$, and no inference rule is applicable. If $P'$ is satisfiable, \ignorar{as in the example of the following section,} we are finished (in this case, the incomplete proof system has been able to refute the formula). Otherwise, we can translate all introduced SAT clauses into 2XOR constraints, using our gadget, and repeat the process. In order to shorten this process, we have to try to get a formula $P'$ that has a small cost, or in other words, a value $m$ for the weight of $\eclause$ as high as possible.  


\ignorar{
\subsection{A Small Example of Max2XOR Refutation}

\begin{figure}[ht]
\begin{center}
\psset{xunit=15mm,yunit=26mm}
\begin{pspicture}(0,-0.3)(6,3.3)
\rput(1,2){\ovalnode{x11}{$x_1^1$}}
\rput(3,2){\ovalnode{x12}{$x_1^2$}}
\rput(4,1){\ovalnode{x21}{$x_2^1$}}
\rput(3,0){\ovalnode{x22}{$x_2^2$}}
\rput(1,0){\ovalnode{x31}{$x_3^1$}}
\rput(0,1){\ovalnode{x32}{$x_3^2$}}
\rput(2,1){\ovalnode{1}{$1$}}
\psset{linecolor=red,linewidth=.06}
\ncline{-}{x11}{x21}
\ncline{-}{x31}{x21}
\ncline{-}{x11}{x31}
\psset{linecolor=blue}
\ncline{-}{x12}{x22}
\ncline{-}{x32}{x22}
\ncline{-}{x12}{x32}
\psset{linecolor=green}
\ncline{-}{x11}{x12}
\ncline{-}{x11}{1}
\ncline{-}{x12}{1}
\psset{linecolor=orange}
\ncline{-}{x21}{x22}
\ncline{-}{x21}{1}
\ncline{-}{x22}{1}
\psset{linecolor=pink}
\ncline{-}{x31}{x32}
\ncline{-}{x31}{1}
\ncline{-}{x32}{1}
\psset{linecolor=black,arcangle=20}
\ncarc{-}{x11}{1}
\ncarc{-}{x21}{1}
\ncarc{-}{x31}{1}
\ncarc{-}{x12}{1}
\ncarc{-}{x22}{1}
\ncarc{-}{x32}{1}
\psset{linestyle=dashed}
\ncarc{-}{1}{x11}
\ncarc{-}{1}{x21}
\ncarc{-}{1}{x31}
\ncarc{-}{1}{x12}
\ncarc{-}{1}{x22}
\ncarc{-}{1}{x32}
\end{pspicture}
\end{center}
\caption{Graphical representation of the proof of $PHP^3_2$. Solid lines between two nodes $x$ and $y$ represent $x\xor y = 1$, dashed lines $x\xor y=0$. Every pair of black lines with the same origin and target generates a contradiction, i.e. $\{\wcla{1/2}{X_i^j},\ \wcla{1/2}{x_i^j}\}\vdash\{\wcla{1/2}{\eclause}\}$, for $i=1,2,3$ and $j=1,2$, i.e. $6$ times. For the rest $5$ colors, each set of $3$ lines of the same color that forms a triangle generates another contradiction plus some ternary clauses $\{\wcla{1/2}{a\xor b=1},\ \wcla{1/2}{b\xor c=1},\ \wcla{1/2}{c\xor a=1}\}\vdash \{\wcla{1/2}{\eclause},\dots\}$.}\label{fig:PHP32}
\end{figure}

In this section, we show how the pigeon-hole principle $PHP^3_2$, with $3$ pigeons and $2$ holes, may be proved with the proof system described in Figure~\ref{fig-rules}. The basic idea to guide the construction of this proof is, using an optimal assignment, do not apply rules when both premises are falsified. This ensures that the added SAT clauses are satisfied by the optimal assignment and, in the end, we get the empty clause with the desired weight.

The clauses $x_i^1\vee x_i^2$, for $i=1,2,3$, generate the constraints $\{\wcla{1/2}{x_i^1 = 1},\ \wcla{1/2}{x_i^2=1},\ \wcla{1/2}{x_i^1\xor x_i^2=1}\}$.
The clauses $\neg x_i^j\vee \neg x_{i'}^j$, for $i,i'=1,2,3$ and $i<i'$, and $j=1,2$, generate the constraints
$\{\wcla{1/2}{x_i^j = 0},\ \wcla{1/2}{x_{i'}^j=0},\ \wcla{1/2}{x_i^j\xor x_{i'}^j=1}\}$. All these constraints are represented in Figure~\ref{fig:PHP32}. From them, only cutting the constraints with the same colors, we get $\wcla{11/2}{\eclause}$, plus some ternary clauses. The Max2XOR problem comes from the translation of $9$ binary clauses. Therefore, we had to get $\eclause$ with weight $9\cdot 1/2+1= 11/2$ to prove the unsatisfiability of the original formula. This concludes the proof of $PHP^3_2$, without taking the ternary clauses on consideration.
}

\ignorar{
\hl{---------------}

We now define a new Proof System for SAT. The idea is to combine the application of  the gadget from SAT to Max2XOR and the Max2XOR resolution rule.

First we recall that every time we apply the gadget we artificially increase the optimal cost of the formula we try to solve. Lemma \ref{lem:lower-bound} shows us that when we apply the gadget to a subset of clauses of the input SAT formula we get an increase of the optimal cost. In particular, that is the sum of $\beta_{|C|} - \alpha_{|C|}$, for each clause $C$ we translate to Max2XOR through the gadget. Then, if for this subset of clauses we are able to find a greater cost than the mentioned lower bound then we can conclude that the original set of $k$SAT clauses is unsatisfiable.


\begin{lemma} \label{lem:lower-bound}
For any SAT formula $\varphi$, let $\varphi'$ be the formula resulting from replacing every $k$-clause, with $k>2$, by a set of binary clauses using a $k$SAT to Max2XOR $(\alpha_k,\beta_k)$-gadget. Then $\varphi$ is unsatisfiable if, and only if, the minimal number of unsatisfiable clauses in $\varphi'$ is at least
\[
1+\sum_{\substack{C\in\varphi\\|C|>2}}(\beta_{|C|}-\alpha_{|C|})
\]
\end{lemma}

Second, we present the Max2XOR-based Proof System for SAT.

\begin{definition}[{\bf Max2XOR-based Proof system for SAT}] 
Fixed a set of hypothesis $H$, a Max2XOR-based proof of $H \vdash \eclause$ is a sequence $F_1 \vdash \cdots \vdash F_m$ of positively weighted formulas over the set of variables $\{x_1,\dots,x_n\}$ such that:

For every step $F_{i} \vdash F_{i+1}$, we do one of the following:

\begin{enumerate}
    \item We apply the Max2XOR gadget.
    \item We apply the Max2XOR Resolution rule. 
    \item \hl{We introduce a tautological clause $(x_i \vee \neg x_i)$}
    \item \hl{We apply the SAT Resolution Rule (Obviamente si ponemos esto es completo ...), algo tenemos que decir al respecto en la práctica.}. 
\end{enumerate}

\end{definition}
}

\section{Conclusions}

We have presented a new gadget from SAT into Max2XOR. We have also presented a first version of an incomplete proof system for Max2XOR, and pointed out some ideas to get a complete version. The final definition of this system, and the proof of completeness is left as further work. The practical implementation of a Max2XOR solver based on these ideas is also a future work.

This opens the avenue to explore the practical power of a new proof system for SAT, where the clauses in the input SAT instance are translated into Max2XOR and the Max2XOR resolution rules are applied to them.
 

\ignorar{
{\color{red}*** LO QUE SIGUE LO BORRAREMOS ***}

\section{A Generalization of MaxSAT equivalence and $\alpha$-Gadget}

In this Section, we generalize the notion of MaxSAT equivalence~\cite{SAT06,AIJ1} and of $\alpha$-gadget~\cite{gadgets}. The idea is to model the reduction of a class of optimization problems into another as efficiently as possible, in such a way that, if we have a approximated algorithm for one class, we can get a good approximated algorithm for the other.

We start by generalizing the notion of constraint function. \cite{gadgets} define a \emph{constraint function} as a function as a Boolean function. Here we generalize them to the notion of \emph{(weighted) constraint problem}, defined as functions over rational numbers $f:\{0,1\}^n\to \mathbb{Q}$ and $n$ Boolean variables.

Then, we generalize the notion of gap-preserving $\alpha$-gadget to $\alpha$-preserving reduction:

\begin{definition}
An $\alpha$-preserving reduction from a class of constraint problems to another is a procedure that, given a constraint problem $f:\{0,1\}^n\to \mathbb{Q}$, returns a constrained problem $g:\{0,1\}^{n+m}\to\mathbb{Q}$ such that:
\begin{enumerate}
    \item 
\end{enumerate}
\end{definition}

\section{From Max2SAT to Parity-Check}

In \cite{AIJ1} MaxSAT equivalence is defined as: two multisets of clauses are equivalent $A\equiv B$ if, for any interpretation, the number of clauses unsatisfied in $A$ and in $B$ are equal. Alternatively, we could define MaxSAT equivalence using the number of satisfied clauses. Notice that, if the number of clauses in $A$ and $B$ are different, then both definitions are not equivalent. For instance, $A=\{x\vee y,\neg x\vee y\}$ and $B=\{y\}$ are equivalent attending to the original definition. However, when $I(x)=true$ and $I(y)=true$, the number of satisfied clauses in $A$ is $2$, whereas in $B$ it is $1$. Therefore, they are not equivalent according to the second definition. Here, we generalize these two definitions as follows:

\begin{definition}
Given two formulas $A$ and $B$, we say that they are MaxSAT equivalent, noted $A\equiv B$, if there exists a positive or negative constant $k$, such that, for any interpretation of the variables, the number of clauses satisfied in $A$ is equal to the number of clauses satisfied in $B$ plus $k$. Formally, $\exists k. \forall I. I(A)=I(B)+k$.
\end{definition}

Intuitively, the definition corresponds to the usual definition of MaxSAT equivalence, with the possibility of removing (and adding) empty clauses and tautologies. With this definition, $A=\{x,\neg x\}$ and $B=\emptyset$ are MaxSAT equivalent (taking $k=1$), and $B$ and $C=\{\eclause\}$ are also equivalent. Notice that this relation does not preserves satisfiability. However, it preserves optimal assignment, i.e. if $A$ and $B$ are MaxSAT equivalent, then the assignments satisfying a maximal number of clauses of $A$ also satisfies a maximal number of clauses of~$B$.

Consider Max2SAT formulas enlarged with \emph{parity clauses} of the form $a\oplus b$, or equivalently of the form $a\leftrightarrow \neg b$, that are satisfied if either $a$ or $b$ are true.

We can consider then the following set of MaxSAT equivalences:
$$
\begin{array}{ll}
(1)&\{ \wcla{n}{a},\ \wcla{n}{\neg a} \} \equiv \emptyset\\
(2)&\{ \wcla{n}{a\vee b},\ \wcla{n}{a\vee\neg b}\} \equiv \{\wcla{n}{a}\}\\
(3)&\{ \wcla{n}{a\vee b},\ \wcla{n}{\neg a\vee\neg b}\}\equiv \{\wcla{n}{a\oplus b}\}\\
(4)&\{ \wcla{n}{a\vee b}\} \equiv \{\wcla{n/2}{a},\ \wcla{n/2}{b},\ \wcla{n/2}{a\oplus b}\}
\end{array}
$$
Applying these equivalences as rewriting rules to the right, we can prove the following theorem:

\begin{theorem}
Any formula is equivalent to a formula that only contains unary clauses (either $x$ or $\neg x$, for every variable $x$) and parity clauses (either $x\oplus y$ or $x\oplus \neg y$, for every pair of variables $x$ and $y$).

Moreover, if all original weights are integer, all weights in the new formulas are half of an integer.
\end{theorem}

In order to prove the theorem, we will show how to construct the new formula. First, we proceed iteratively for every pair of variables $x$ and $y$. 

Let the four clauses involving the two variables be:
\[
\begin{array}{l}
\wcla{n_1}{x\vee y}\\
\wcla{n_2}{\neg x\vee \neg y}\\
\wcla{n_3}{x\vee \neg y}\\
\wcla{n_4}{\neg x\vee y}
\end{array}
\]

First, we apply intensively equivalence~(2). Every application removes one of these four clauses, hence we can apply it at most three times. There are three possibilities:

If $n_1+n_2 < n_3+n_4$, we say that the relation between $x$ and $y$ is \emph{co-variant}, and the clauses tend to assign the same value to both variables. Applying equivalence (2) we get $\wcla{n_3'}{x\vee \neg y}$, $\wcla{n_4'}{\neg x\vee y}$, where $n_3'+n_4' = n_3+n_4-(n_1+n_2)$, plus some unary clauses.

If $n_1+n_2 > n_3+n_4$, we say that the relation between $x$ and $y$ is \emph{counter-variant}, and the clauses tend to assign distinct values to both variables. Applying equivalence (2) we get $\wcla{n_1'}{x\vee y}$, $\wcla{n_2'}{\neg x\vee \neg y}$, where $n_1'+n_2' = n_1+n_2-(n_3+n_4)$, plus some unary clauses.

If $n_1+n_2 = n_3+n_4$, we say that there is no relation between $x$ and $y$. Applying equivalence (2) we only get unary clauses.

Second, we apply intensively equivalence~(3). This generates $\wcla{\min\{n_3',n_4'\}}{x\oplus y}$ in the first case, and $\wcla{\min\{n_1',n_2'\}}{x\oplus \neg y}$ in the second case. Then, only remains at most one disjoint clause of $x$ and $y$ that can be removed applying equivalence~(4).

Once all disjoint clauses have been removed, we apply equivalence (1) to leave at most one unary clause for every variable.

\begin{example}
Given
\[
\left\{
\begin{array}{llll}
\wcla{1}{y} & \wcla{2}{x\vee y}          & \wcla{2}{y\vee \neg z} &\wcla{1}{x\vee z}\\
            & \wcla{1}{\neg x\vee\neg y}      & \wcla{3}{\neg y\vee z} & \wcla{2}{\neg x\vee\neg z}\\
            & \wcla{1}{x\vee\neg y} &                        &\wcla{3}{\neg x\vee z}
\end{array}
\right\}
\]

We get
\[
\left\{
\begin{array}{ll}
     \wcla{1}{\neg x}& \wcla{1}{x\oplus y}\\
     \wcla{1/2}{y}   & \wcla{5/2}{y\oplus \neg z}\\
     \wcla{3/2}{z}
\end{array}
\right\}
\]
\end{example}
}

\bibliography{bibliography.bib}

\end{document}